\newcommand{\qedsymbol}{\ensuremath{\square}}
\def\R{\mathbb{R}}
\def\N{\mathbb{N}}
\newcommand\QRRT[1]{\ensuremath{\text{QRRT\_}{(#1)}}} 
\newcommand\subfigE[2]{
\begin{subfigure}[b]{0.31\linewidth}
   \includegraphics[width=\linewidth]{#1}
   \caption{#2}
\end{subfigure}
}
\newcommand{\bigslant}[2]{{\raisebox{.2em}{$#1$}\left/\raisebox{-.2em}{$#2$}\right.}}
\def\Z{Z}
\def\free{\textnormal{free}}
\def\obs{\textnormal{obs}}
\def\X{X}
\def\Xf{\X_{\free}}
\def\Xo{\X_{\text{\obs}}}
\def\xprime{x^{\prime}}
\def\xi{x_I}
\def\xg{x_G}
\def\tX{Y}
\def\tx{y}
\def\tXf{\tX_{\free}}
\def\tXo{\tX_{\obs}}
\def\tildeX{\tilde{\X}}
\def\tildeXf{\tildeX_{\free}}
\def\PriorityQueue{\ensuremath{\mathbf{Q}}}
\def\planningspace{\left(\X,\phi\right)}
\def\planningproblem{\left(\Xf,\xi, \xg\right)}
\def\alds{\big(\pi,\tphi\big)}
\def\tphi{\phi_{\tX}}
\def\hstar{h^{*}}
\def\xr{x_{\text{rand}}}
\def\xn{x_{\text{near}}}
\def\xw{x_{\text{new}}}
\def\G{\ensuremath{\mathbf{T}}}
\def\path{\ensuremath{\mathbf{p}}}
\newcommand{\toprule}{\hrule height.8pt depth0pt \kern2pt} % Caption top horizontal rule+skip
\newcommand{\midrule}{\kern2pt\hrule\kern2pt} % Caption bottom (or mid) horizontal rule+skip
\newcommand{\bottomrule}{\kern2pt\hrule\relax}% Algorithm bottom rule
\newcommand{\algcaption}[2][]{%
  \refstepcounter{algorithm}%
  %\@ifmtarg{#1}
    %{\addcontentsline{loa}{figure}{\protect\numberline{\thealgorithm}{\ignorespaces #2}}}
    %{\addcontentsline{loa}{figure}{\protect\numberline{\thealgorithm}{\ignorespaces #1}}}%
  \toprule
  \textbf{{\raggedright\fname@algorithm~\thealgorithm}}\ #2\par % Caption
  \midrule
}
\def\simplificationName{ALDS\xspace}
\begin{document}
\mainmatter

%%%%%% TITLE
\title{Rapidly-Exploring Quotient-Space Trees:\\Motion Planning using Sequential Simplifications}
\titlerunning{Rapidly-Exploring Quotient-Space Trees}

%%%%%% AUTHOR
\author{Andreas Orthey \and Marc Toussaint}
\authorrunning{A. Orthey and M. Toussaint}

%%%%%% INST
\institute{University of Stuttgart, Germany\\
\email{\{andreas.orthey, marc.toussaint\}@ipvs.uni-stuttgart.de}}

\maketitle
\begin{abstract}

Motion planning problems can be simplified by admissible projections of the configuration space to sequences of lower-dimensional quotient-spaces, called sequential simplifications. To exploit sequential simplifications, we present the Quotient-space Rapidly-exploring Random Trees (QRRT) algorithm. QRRT takes as input a start and a goal configuration, and a sequence of quotient-spaces. The algorithm grows trees on the quotient-spaces both sequentially and simultaneously to guarantee a dense coverage. 
QRRT is shown to be (1) probabilistically complete, and (2) can reduce the runtime by at least one order of magnitude. 
However, we show in experiments that the runtime varies substantially between different quotient-space sequences. To find out why, we perform an additional experiment, showing that the more narrow an environment, the more a quotient-space sequence can reduce runtime.

\end{abstract}

\section{Introduction}

Motion planning algorithms are fundamental for robotic applications like manufacturing, object disassembly, tele-operation or space exploration \cite{lavalle_2006}.
Given an environment and a robot, a motion planning algorithm aims to find a feasible configuration space path from a start to a goal configuration. The complexity of a planning algorithm scales with the degrees-of-freedom (dofs) of the robot, and for high-dof robots the runtime can be prohibitive. 

To reduce the runtime of planning algorithms, prior work has shown that it is effective to consider certain simplifications of the planning problem, such as progressive relaxations \cite{ferbach_1997}, iterative constraint relaxations \cite{bayazit_2005}, lower-dimensional projections \cite{zhang_2009}, possibility graphs
\cite{grey_2017} or quotient space sequences \cite{orthey_2018}. 

We consider a particular class of simplifications which we call \emph{sequential simplifications}. Sequential simplifications are sequences of admissible lower-dimen\-sional simplifications (ALDS). Prior work showed that sequential simplifications can be exploited to construct probabilistically complete algorithms.

Our contribution is an algorithm exploiting sequential simplifications, which we call the Quotient-space Rapidly-exploring Random Trees (QRRT) algorithm. We show QRRT to be probabilistically complete and able to reduce runtime by at least one order of magnitude. 
The runtime reduction, however, depends on the choice of the sequential simplification, which we show to be dependent on narrow passages in the environment.

%\todo{Proper structure of introduction: Coin core problem in the field, existing ideas, proposed ideas, list own contributions}

%% also say that we prune equivalence classes using admissible heuristics
%% exploit SS
%% p complete
%% one order of magnitude
%% single-query

%% introduce non-admissible projections

%%% Motion planning algorithms are important
%%% Core problem: They usually do not scale to high-dof robots, i.e. they are intractable, their runtime is insane. 
%%% Prior works showed good scaling performance using simplification such as...
%%% We contribute to this effort by developing a new algorithm, which can exploit any simplification, and which is p complete, whenever the simplifications involve admissible projections. 

%A simplification is admissible, if a solution to a lower-dim space is a necessary condition for a solution to a higher-dim space

%% Our contributions:
%% (1) We define SS, and show that they are quotient-spaces, expansion of free space and imply admissible heuristics. 

%% Our algorithm works for any finite number of projections (instead of just two like \cite{grey_2017})
%% it is single query (instead of multi-query such as \cite{orthey_2018}
%% it is p complete when using admissible simplifications
%% 

%% we provide a detailed experimental evaluation using many SS on four different robotic systems. 
%% We provide a reason why some SS are better in some situations
\section{Related Work}

Prior works on simplifications of the configuration space fall into three categories. First, simplifications which are non-admissible, meaning a simplified solution might not be a necessary condition for a global solution. Second, simplifications which are admissible, but the authors did not develop an algorithm with completeness guarantees, i.e. the algorithm might fail to find a path if one exists. And third, simplifications which are admissible and a (probabilistically) complete algorithm has been developed.

%\todo{what is an (in)admissible projection is not defined yet.}

%Admissible, not complete
%\cite{zhang_2009} 
%\cite{bayazit_2005}
%\cite{ferbach_1997}
%Non Admissible
%\cite{tonneau_2018}, 
%\cite{rickert_2014}
\subsubsection{Non-admissible Simplification}

The runtime of planning algorithms can sometimes be decreased substantially using non-admissible lower-dimensional simplifications. Non-admissible simplifications can be obtained either by random projections \cite{sucan_2009}, or by overestimating the geometry of the robot, either using an enlarged shape \cite{tonneau_2018}, or using balls of free workspace \cite{rickert_2014}. While those works show runtime savings of up to three orders of magnitude \cite{tonneau_2018}, there are no completeness guarantees, and the algorithms might fail even if a path exists.

\subsubsection{Admissible Simplification, Non-Complete Algorithm}

Admissible simplifications have been used in the literature under different names, as sequences of approximations \cite{ferbach_1997}, as iterative constraint relaxations \cite{bayazit_2005}, or as 
sequences of lower-dimensional subproblems \cite{zhang_2009}. All three prior works search sequentially over the abstraction levels, but might fail to find a path because of a lack of backtracking.

%Admissible, complete (general)
%\cite{grey_2017} 
%\cite{orthey_2018}
%Admissible, complete (robot specific)
%\cite{gochev_2012}
%\cite{orthey_2018b},
\subsubsection{Admissible Simplification, (Probabilistically) Complete Algorithm}

Recent work has shown that probabilistically complete algorithms can be constructed using admissible simplifications by searching not only sequentially over abstraction levels, but also \emph{simultaneously}. This can be done either using a single simplification \cite{grey_2017,gochev_2012}, or sequences of simplifications \cite{orthey_2018}. While those approaches work for arbitrary robots, there exist more efficient algorithms for specialized cases like serial chains \cite{orthey_2018b}, or manipulators \cite{saha_2005}. 
The algorithm we propose uses a sequence of admissible simplifications, similar to \cite{orthey_2018}. However, we differ from \cite{orthey_2018} in (1) developing a single-query planner and (2) evaluating not only one single sequence of simplifications, but multiple sequences of simplification.

\section{Sequential Simplifications}

Configuration spaces can be simplified using a sequence of admissible lower-dimensional simplifications, which we call \emph{sequential simplifications}. To understand sequential simplifications, we first discuss the special case of constraint relaxations \cite{pearl_1984}, and then generalize it to admissible lower-dimensional simplifications (\simplificationName). To design efficient algorithms exploiting \simplificationName, we show that \simplificationName can be seen from three viewpoints. First, they are quotient spaces, spaces where each point represents an equivalence class of configurations. Second, they have a geometrical interpretation as expansion of the free configuration space (Theorem \ref{theorem:lower_dimensional_projection}). Third, they imply admissible heuristics, which allow us to prune equivalence classes of configurations. Consequently, \simplificationName can be seen either as quotient-spaces, or as expansion of free space, or as a source of admissible heuristics.

\subsection{Motion Planning and Constraint Relaxation}

Let $\X$ be the configuration space of a robot. We denote by $\phi:\X \rightarrow \{0,1\}$ the \emph{constraint function} which takes a subset $U \subseteq \X$ and evaluates to zero if at least one $x \in U$ is constraint-free and to one otherwise. The tuple $\planningspace$ will be called a \emph{planning space}.

Let $\Xf = \{x \in \X \mid \phi(x) = 0\}$ be the free configuration space. 
Given an initial configuration $\xi \in \Xf$ and a goal configuration $\xg \in \Xf$, we define $\planningproblem$ to be a \emph{motion planning problem}. 
Our goal is to design an algorithm finding a path from $\xi$ to $\xg$ through $\Xf$.

It is often advantageous to plan in a simplified space. The most basic simplification is a \emph{constraint relaxation}, which is a reduction of $\planningspace$ to $(\X,\tilde{\phi})$ such that
\begin{equation}
    \tilde{\phi}(x) \leq \phi(x)
    \label{eq:cr}
\end{equation}
for all $x\in X$ is fulfilled. Eq.\eqref{eq:cr} is also called a falseness preserving mapping \cite{zhang_2004}. Note that Eq.\eqref{eq:cr} is equivalent to an expansion of the free space as $\Xf \subseteq \tildeXf$ whereby $\tildeXf = \{x \in \X \mid \tilde{\phi}(x) = 0\}$.

%%%%%%%%%%%%%%%%%%%%%%%%%%%%%%%%%%%%%%%%%%%%%%%%%%%%%%%%%%%%%%%%%%%%%%
\subsection{Admissible Lower-Dimensional Simplifications}
%%%%%%%%%%%%%%%%%%%%%%%%%%%%%%%%%%%%%%%%%%%%%%%%%%%%%%%%%%%%%%%%%%%%%%

Constraint relaxations are a special case of \emph{admissible lower-dimensional simplifications} (ALDS). A lower-dimensional simplification of $\planningspace$ is a tuple $\alds$ consisting of a projection $\pi: \X \rightarrow \tX$, mapping the configuration space $X$ to a lower-dimensional space $\tX$ and mapping open sets to open sets, together with a constraint function $\tphi: \tX \rightarrow \{0,1\}$ on $\tX$. 

We say that a lower-dimensional simplification is \emph{admissible} if for all $\tx \in \tX$ whenever $\tphi(\tx)=1$ then $\phi(\pi^{-1}(\tx))=1$, or equivalently 
\begin{equation}
    \tphi(\tx) \leq \phi(\pi^{-1}(\tx))
    \label{eq:alds_cr}
\end{equation}
\noindent whereby $\pi^{-1}(\tx)$ is called the \emph{fiber} of $\tx$ in $\X$. We call this an \emph{admissible} lower-dimensional simplification. Constraint relaxation is thus a special case of \simplificationName by using for all $x \in X$ the identity mapping $\pi(x) = \pi^{-1}(x) = x$. 

\subsection{Simplification as Quotient-Space}

Any projection $\pi$ can be viewed as a \emph{quotient space mapping} \cite{munkres_2000}. A quotient space mapping partitions the configuration space into equivalence classes of points $X_{\tx}=\{x\in X \mid \pi(x)=\tx\}$, all mapping to the same point $\tx \in \tX$. The set of equivalence classes is indexed by the space $\tX$, consequently called the quotient space of $X$ under the equivalence relation imposed by $\pi$ \cite{zhang_2004,orthey_2018}. 

\subsubsection{Canonical \simplificationName}

If $X$ is a product space $\X = \tX \times \Z$, we can use a canonical projection $\pi: \X \rightarrow \tX$ to define a canonical \simplificationName. The canonical \simplificationName yields a quotient space $\tX$, where each point $\tx \in \tX$ represents an equivalence class of configurations $\pi^{-1}(\tx)=\{(\tx,z) \mid z \in \Z\}$. To ensure admissibility we define the constraint function $\tphi(\tx)$ to be one if and only if the constraint function $\phi$ of every point in the fiber $\pi^{-1}(\tx)$ evaluates to one. 

\subsubsection{Efficient \simplificationName}
\label{sec:efficient_alds}

A canonical \simplificationName, however, might require to evaluate every member of the fiber $\pi^{-1}(\tx)$ to check admissibility, and therefore fail to be computationally efficient. We say that a simplification is efficient if for any $\tx \in \tX$ there exists an $\xprime$ in the fiber $\pi^{-1}(\tx)$ such that if $\phi(\xprime)=1$ then $\tphi(\tx)=1$. Efficient simplifications can be constructed as nestings of robotic systems, where checking collision of a nested robot implies collision of the original robot, which requires only a \emph{single} collision-check \cite{orthey_2018}.

\subsection{Simplification as Expansion of Free Space}

We can interpret \simplificationName in a geometric way as expansion of free space, as guaranteed by the following theorem

\begin{theorem}
Let $\alds$ be an admissible lower-dimensional projection of the planning space $\planningspace$. Then for all $\tx \in \tX$ the following are equivalent
\begin{align}
    [\tphi(\tx)=1 ]          &\Rightarrow [\phi(\pi^{-1}(\tx))=1]\label{eqn:1}\\
    [\phi(\pi^{-1}(\tx))=0 ] &\Rightarrow [\tphi(\tx)=0]\label{eqn:2}\\
    \Xf                    &\subseteq \pi^{-1}(\tXf)\label{eqn:3}
\end{align}
\label{theorem:lower_dimensional_projection}
\end{theorem}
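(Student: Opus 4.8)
The plan is to establish the chain of equivalences $\eqref{eqn:1} \Leftrightarrow \eqref{eqn:2} \Leftrightarrow \eqref{eqn:3}$, reading $\eqref{eqn:1}$ and $\eqref{eqn:2}$ as statements quantified over all $\tx \in \tX$. The first equivalence is purely formal. Since both $\phi$ and $\tphi$ take values in $\{0,1\}$, the assertion $\phi(\pi^{-1}(\tx))=0$ is exactly the negation of $\phi(\pi^{-1}(\tx))=1$, and likewise $\tphi(\tx)=0$ is the negation of $\tphi(\tx)=1$. Hence $\eqref{eqn:2}$ is nothing but the contrapositive of $\eqref{eqn:1}$, so the two are logically equivalent for each fixed $\tx$, and therefore also equivalent once the universal quantifier over $\tx$ is applied.

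The substantive part is the equivalence $\eqref{eqn:2} \Leftrightarrow \eqref{eqn:3}$, for which I would first unfold the relevant definitions. Recall that $\phi$ evaluated on the set $\pi^{-1}(\tx)$ returns $0$ precisely when the fiber contains at least one free configuration $x \in \Xf$, and that $x \in \pi^{-1}(\tXf)$ is by definition equivalent to $\pi(x) \in \tXf$, i.e. $\tphi(\pi(x))=0$. With this dictionary in place the two directions are short.

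For $\eqref{eqn:2} \Rightarrow \eqref{eqn:3}$, I would take an arbitrary $x \in \Xf$ and set $\tx := \pi(x)$; then $x$ is a free member of the fiber $\pi^{-1}(\tx)$, so $\phi(\pi^{-1}(\tx))=0$, and $\eqref{eqn:2}$ yields $\tphi(\tx)=0$, that is $x \in \pi^{-1}(\tXf)$. For the converse $\eqref{eqn:3} \Rightarrow \eqref{eqn:2}$, I would assume $\phi(\pi^{-1}(\tx))=0$, extract a free point $x \in \pi^{-1}(\tx)\cap\Xf$, and apply $\eqref{eqn:3}$ to get $\pi(x)=\tx \in \tXf$, hence $\tphi(\tx)=0$.

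The step that demands the most care is the correct handling of $\phi$ applied to the set $\pi^{-1}(\tx)$: by the stated convention it returns $0$ as soon as a \emph{single} member of the fiber is free, and this existential reading of the set-valued constraint function is exactly what lets me pass between the per-fiber statement $\eqref{eqn:2}$ and the global inclusion $\eqref{eqn:3}$. I would also remark that the admissibility hypothesis is literally condition $\eqref{eqn:1}$, so the content of the theorem is to identify this hypothesis with the geometric picture of free-space expansion in $\eqref{eqn:3}$.
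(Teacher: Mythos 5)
Your proof is correct and takes essentially the same approach as the paper's: both rest on the observation that \eqref{eqn:2} is the contrapositive of \eqref{eqn:1}, and on the existential reading of the set-valued constraint function (a fiber evaluates to $0$ iff it contains a free configuration) to pass between the per-fiber statements and the inclusion \eqref{eqn:3}. The only cosmetic difference is organizational: you prove the pairwise equivalences $\eqref{eqn:1}\Leftrightarrow\eqref{eqn:2}$ and $\eqref{eqn:2}\Leftrightarrow\eqref{eqn:3}$ with direct witness-point arguments, whereas the paper closes the cycle $\eqref{eqn:1}\Rightarrow\eqref{eqn:2}\Rightarrow\eqref{eqn:3}\Rightarrow\eqref{eqn:1}$, arguing the middle step by contradiction and the last step via the complement inclusion $\pi^{-1}(\tXo)\subseteq\Xo$.
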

\begin{proof}
$\eqref{eqn:1} \rightarrow \eqref{eqn:2}$: Contrapositive of \eqref{eqn:1}.

$\eqref{eqn:2} \rightarrow \eqref{eqn:3}$: Let $x \in \Xf$ ($\phi(x)=0$), and let $\tx = \pi(x)$. Then $\tphi(\tx)$ can evaluate to either $0$ or $1$. 
If it is $1$, then by \eqref{eqn:1}, $x$ must evaluate to $\phi(\pi^{-1}(\pi(x)))=1$, a contradiction.
Therefore, $\tphi(\tx)=0$. But then $\phi(\pi^{-1}(\pi(x)))=0$. Therefore $x \in \pi^{-1}(\tXf)$.

$\eqref{eqn:3} \rightarrow \eqref{eqn:1}$: Let $\tx \in \tX$, and $\tphi(\tx)=1$. Then $\tx \in \tXo$. From \eqref{eqn:3} it follows that $\pi^{-1}(\tXo) \subseteq \Xo$. Therefore $\pi^{-1}(\tx) \in \Xo$, and consequently $\phi(\pi^{-1}(\tx))=1$.
$\hfill\qedsymbol$
\end{proof}

\renewcommand\subfigE[2]{
\begin{subfigure}[t]{0.31\linewidth}
   \includegraphics[width=\linewidth]{#1}
   \caption{#2}
\end{subfigure}
}
\begin{figure}[!ht]
    \centering
    \subfigE{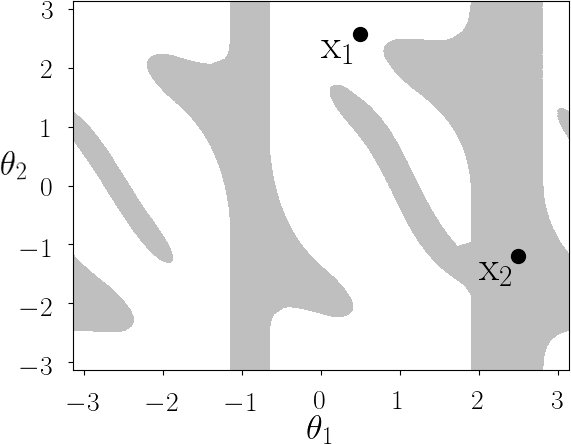}{$\Xf$ (white)}
    \subfigE{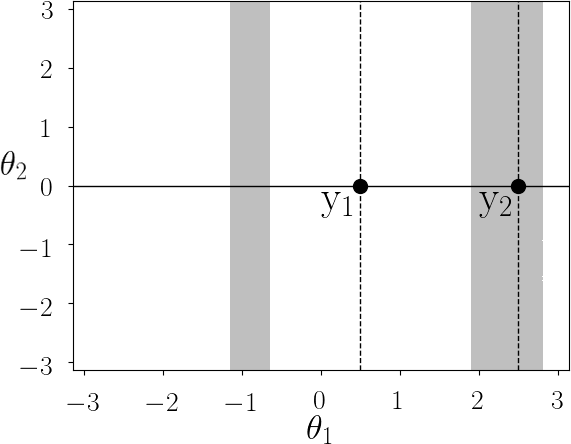}{$\pi^{-1}(\tXf)$ (white)}
    \subfigE{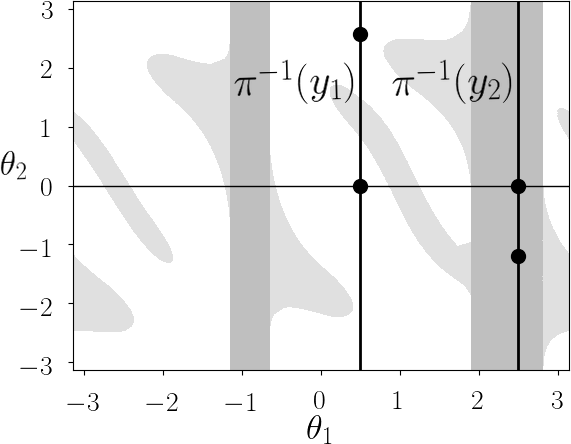}{$\Xf$ (white) as subset of $\pi^{-1}(\tXf)$ (lightgrey plus white)}
    \caption{Illustration of admissible projections as expansion of free space. Figures adapted from Orthey et al. \cite{orthey_2018}.
    \label{fig:dim_reduction_orthey}}
\end{figure}

Fig. \ref{fig:dim_reduction_orthey} shows a visualization of the expansion of free space. We show the configuration space of a 2-dof manipulator in the plane (left), whereby a configuration is marked as gray when it violates constraints. 
One admissible lower-dimensional projection can be obtained by projecting onto the first link by removing the second link. This creates a quotient-space which we can project back into the original configuration space (middle). The overlap is shown on the right, where we observe the free space $\Xf$ to be a subset of the projection of the simplified free space. Theorem \ref{theorem:lower_dimensional_projection} guarantees that this is always true.

We can build admissible lower-dimensional simplifications by nesting robots in each other, by removing degrees-of-freedom \cite{orthey_2018}, removing links \cite{bayazit_2005,zhang_2009}, shrinking links sequentially to zero \cite{baginski_1996} or shrinking links towards the robots medial-axis \cite{saha_2005}. A particular example would be to nest a torso inside a humanoid robot \cite{grey_2017}.
%%%%%%%%%%%%%%%%%%%%%%%%%%%%%%%%%%%%%%%%%%%%%%%%%%%%%%%%%%%%%%%%%%%%%%
\subsection{Simplification as Source of Admissible Heuristic}
%%%%%%%%%%%%%%%%%%%%%%%%%%%%%%%%%%%%%%%%%%%%%%%%%%%%%%%%%%%%%%%%%%%%%%

\def\Vpi{W_{\pi(x)}}
\def\U{U}
\def\Ux{\U_x}

An admissible lower dimensional simplification $\alds$ implies an admissible heuristic. This requires us to define in addition (1) a designated goal configuration $x_G$ in $\Xf$, and (2) a cost-to-go function $\hstar$ for every $x \in X$. One cost-to-go function is the \emph{goal-reachability function} defined for all $x$ in $X$ as
\begin{equation}
    \hstar(x) = \begin{cases}
    \infty & \Ux = \emptyset\\
    0 & \text{otherwise}
    \end{cases}
    \label{eq:costtogo}
\end{equation}
\noindent whereby $\Ux$ is a path-connected\footnote{A topological space $\U$ is said to be path-connected if for any $x_1,x_2 \in \U$, there exists a continuous path $p:[0,1] \rightarrow \U$ with $p(0)=x_1$ and $p(1)=x_2$.} subset of $\Xf$ such that $x$ and $x_G$ lie in $\Ux$. If no such subset exists, then $\Ux = \emptyset$. 

Because $\hstar$ is difficult to compute, we search for an estimate $h$ of $\hstar$ which we call a heuristic function \cite{pearl_1984}. One heuristic function is the \emph{lower-dimensional reachability function}
\def\hpi{h_{\pi}}
\begin{equation}
   \hpi(x) = \begin{cases}
    \infty & \Vpi = \emptyset\\
    0 & \text{otherwise}
    \end{cases}
    \label{eq:heuristic}
\end{equation}
\noindent whereby $\Vpi$ is a path-connected subset of $\tXf$ such that $\pi(x)$ and $\pi(x_G)$ lie in $\Vpi$. We like to prove that $\hpi$ is \emph{admissible}, meaning $\hpi(x) \leq \hstar(x)$ for any $x \in X$.
\begin{theorem}
Let $\hstar$ be the goal-reachability function and $\alds$ an admissible lower-dimensional projection. Then the lower dimensional reachability heuristic $\hpi$ is admissible.\label{theorem:admissible_heuristic}
\end{theorem}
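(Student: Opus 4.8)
The plan is to exploit the fact that both $\hstar$ and $\hpi$ take only the two values $0$ and $\infty$. Consequently the desired inequality $\hpi(x) \leq \hstar(x)$ can fail, for some $x \in \X$, in exactly one case: when $\hpi(x) = \infty$ while $\hstar(x) = 0$. It therefore suffices to rule out this single case, i.e.\ to establish the implication that if $\hstar(x) = 0$ then $\hpi(x) = 0$. In words, I would show that whenever $x$ and $\xg$ can be joined through $\Xf$, their projections $\pi(x)$ and $\pi(\xg)$ can be joined through $\tXf$.

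First I would unpack $\hstar(x) = 0$ to obtain a path-connected set $\Ux \subseteq \Xf$ containing both $x$ and $\xg$, or equivalently a continuous path $p:[0,1]\to \Ux$ with $p(0)=x$ and $p(1)=\xg$. The key construction is to push this forward through the projection: since $\pi$ is a quotient-space mapping and hence continuous, the composition $\pi \circ p:[0,1]\to \tX$ is again continuous, with endpoints $\pi(x)$ and $\pi(\xg)$, and its image lies in the set $\pi(\Ux)$, which (as the continuous image of a path-connected set) is itself path-connected.

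The remaining step is to verify that this image lies in the quotient's free space, $\pi(\Ux) \subseteq \tXf$. Here I would invoke Theorem \ref{theorem:lower_dimensional_projection}, specifically the inclusion \eqref{eqn:3}, which gives $\Xf \subseteq \pi^{-1}(\tXf)$. Since $\Ux \subseteq \Xf \subseteq \pi^{-1}(\tXf)$, every point of $\Ux$ projects into $\tXf$, so $\pi(\Ux) \subseteq \tXf$. Taking $\Vpi = \pi(\Ux)$ then exhibits a path-connected subset of $\tXf$ containing $\pi(x)$ and $\pi(\xg)$; hence $\Vpi \neq \emptyset$ and $\hpi(x) = 0$, as required.

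The main obstacle is conceptual rather than computational: one must be careful that the map-theoretic property actually used is \emph{continuity} of $\pi$ (so that paths and images stay continuous), whereas the admissibility condition Eq.\eqref{eq:alds_cr} and Theorem \ref{theorem:lower_dimensional_projection} only supply the set inclusion \eqref{eqn:3}. Marrying the topological fact — a continuous image of a path-connected set is path-connected — with the admissibility inclusion is the crux; once both are in hand, the argument collapses to a short chain of inclusions. I would also remark that the converse fails in general, since $\hpi(x) = 0 < \infty = \hstar(x)$ is possible, which is precisely what makes $\hpi$ a nontrivial yet admissible (optimistic) relaxation.
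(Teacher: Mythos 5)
Your proposal is correct and matches the paper's own proof essentially step for step: both handle the trivial case $\Ux=\emptyset$, and in the nontrivial case both set $\Vpi = \pi(\Ux)$, use continuity of the quotient map to get path-connectedness of the image, and invoke the inclusion \eqref{eqn:3} of Theorem~\ref{theorem:lower_dimensional_projection} to place $\Vpi$ inside $\tXf$. If anything, you spell out the chain $\Ux \subseteq \Xf \subseteq \pi^{-1}(\tXf) \Rightarrow \pi(\Ux) \subseteq \tXf$ more explicitly than the paper does.
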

\begin{proof}
%Show that h(x) \leq hstar(x).
Let $x\in X$. Either (1) $U_x=\emptyset$, then $\hstar(x)=\infty$, and $\hpi(x) 
\leq \hstar(x)$ by definition. Or (2) there exists $U_x \neq \emptyset$. Then $\hstar(x)=0$. Let $\Vpi = \pi(U_x)$. Then $\Vpi$ is a path-connected subset of $\tXf$, because it is path-connected in $\tX$ by the quotient-space mapping, and it is a subset of $\tXf$ by Theorem \ref{theorem:lower_dimensional_projection}. Further, by definition we have that both $\pi(x)$ and $\pi(x_G)$ lie in $\Vpi$. Therefore $\hpi(x)=0$, and $\hpi(x) \leq \hstar(x)$.
\end{proof}

In Fig.~\ref{fig:dim_reduction_orthey}, the configuration $y_2 = \pi(x_2)$ is infeasible on the lower-dimensional space, i.e. $\hpi(x_2)$ is infinite. Because $h(x_2)$ is an underestimate, we know that the fiber $\pi^{-1}(y_2)$ (solid vertical line) is infeasible (i.e. $\hstar(\xprime)=\infty$ for any $\xprime \in \pi^{-1}(y_2)$) and can be ignored. This is sometimes referred to as the \emph{pruning power} of an admissible heuristic \cite{pearl_1984}.

\subsection{Sequential Simplifications}

We define a sequential simplification of a planning space $\planningspace$ to be a sequence of ALDS, that is, a sequence of $K$ quotient-spaces $\{X_1,\cdots,X_K\}$ obtained from $K-1$ admissible lower-dimensional simplifications $\{\pi_k,\tilde{\phi}_k\}_{k=1}^{K-1}$ 
such that $X_K = X$ and $\phi_K = \phi$. For $K=1$, we obtain the original planning space $\planningspace$, which we call the \emph{trivial simplification}.

\section{Rapidly-exploring Random Quotient-Space Trees}

Our goal is to develop an algorithm which solves the motion planning problem $\planningproblem$ by exploiting sequential simplifications. We require an additional input of quotient-spaces $\{X_1, \cdots, X_K\}$ with $X_K=\X$. The quotient-spaces are then exploited to quickly prune equivalence classes of configurations which are infeasible on a lower-dimensional quotient-space.

We introduce the Quotient-space Rapidly-exploring Random Trees (QRRT) algorithm, which generalizes the Rapidly-exploring Random Tree (RRT) algorithm \cite{lavalle_1998} by growing a tree on each quotient-space instead of growing a single tree on the configuration space.

We divide the description of QRRT into three parts: First we discuss the RRT algorithm, second we describe QRRT as the generalization of RRT to sequences of quotient-spaces, and third we prove that QRRT is probabilistically complete.

\subsection{Rapidly-exploring random trees}

Algorithm \ref{alg:rrt} describes the classical RRT algorithm \cite{lavalle_1998}. RRT takes as input a configuration space $X$, an initial and goal configuration $x_I,x_G \in X$ and returns a path between $x_I$ and $x_G$ if one exists. The algorithms iteratively grows a single tree while a \emph{planner terminate condition} (PTC) is false (Line 2). The PTC can be a number of iterations, a time limit or a desired cost.

In each iteration, we grow the tree (Line 3), and if there is a connection along the tree between $x_I$ and $x_G$ (Line 4), the algorithm returns the shortest path on the tree (Line 5). The grow function is depicted in Algorithm \ref{alg:rrt_grow}, in which we first sample a random point (Line 1), compute the nearest point on the tree (Line 2), and then connect the nearest point to the random point (Line 3).

\subsection{Rapidly-exploring random quotient-space trees}

QRRT generalizes RRT by growing quotient-space trees both sequentially and simultaneously. A description of QRRT is given in Algorithm \ref{alg:qrrt}. Inside the algorithm, we use a priority queue (Line 1) to sort all quotient-spaces according to the number of nodes in each tree. The space with the highest priority is the one with the least number of nodes, calculated as $\frac{1}{N+1}$ with $N$ the number of nodes in the tree $\G_k$. 
Let us assume that we are at iteration $k$ of the for loop (Line 2). From $X_k$ we create a quotient class $Q_k$ (Line 3). The quotient class $Q_k$ consists (1) of a tree $\G_k$ initialized with $x^k_I$, (2) a reference to the previous quotient-space tree $\G_{k-1}$ (if any), and (3) of $C_k=\bigslant{X_k}{X_{k-1}}$ which we refer to as the \emph{fiber space}.
We then push $Q_k$ into the queue (Line 4), which contains all quotient classes $Q_1,\cdots,Q_k$. 
We then iterate until a path has been found on $X_k$ or the PTC becomes true (Line 5), by first taking the quotient-space $Q_{\text{least}}$ from the priority queue with the minimum number of samples (Line 6), grow this quotient-space (Line 7), and push it back onto the queue (Line 8). If the quotient-space $Q_{\text{least}}$ is successfully expanded, the priority will be decreased, so that in the limit every quotient-space will be expanded infinitely many times. 
We then take the quotient-space $Q_k$, check if it contains a connection between initial and goal configuration (Line 9), and if yes compute a path (Line 10). Once a path has been computed, the while loop becomes false (Line 5), and we move to the next quotient-space $X_{k+1}$, or terminate and return the path if $k=K$. 

The growing of the quotient tree (Line 7) is further detailed in Algorithm \ref{alg:qrrt_grow}. GrowQRRT differs from GrowRRT only by its sampling routine: Instead of sampling $X_k$ directly, we sample first a random configuration $x_{k-1}$ from the tree $\G_{k-1}$, and another a configuration $x_c$ from the fiber space. Both configurations together define a unique configuration $\xr$ in $X_k$, with $x_{k-1}$ indexing the fiber, and $x_c$ the position inside the fiber. (Line 1). 
This sampling routine is dense in the \emph{free} configuration space, and the algorithm is therefore probabilistically complete (see below for a proof). Many sampling variants are possible. In our implementation, we sample a random vertex from $\G_{k-1}$, but we could also sample along the edges, sample a neighborhood, or even bias sampling towards a shortest path \cite{orthey_2018}. 

For $K=1$, the QRRT becomes (almost) equivalent to RRT. The difference is an additional overhead of pop-ing (Line 6) and push-ing (Line 8) the configuration space in and out of the priority queue (compare to Algorithm \ref{alg:rrt}). In some preliminary experiments, we observed essentially no difference (on average) between RRT and QRRT with $K=1$. We will refer to this case as QRRT with trivial simplification.

Our algorithm is freely available as a C++ implementation, and has been submitted to the Open Motion Planning Library (OMPL) \cite{sucan_2012}.

\lstset{language=C}
\alglanguage{pseudocode}
%Instead of adding comments, I explained the lines more detailed in the text. Otherwise the algorithms are too cluttered.
\begin{figure}
    \centering
    \algcaption{RRT($x_I,x_G,\X$)\cite{lavalle_1998}}
    \begin{algorithmic}[1]
      \State $\G=\Call{Tree}{x_I}$
      \While{$\neg\Call{ptc}{}$}
        \State $\Call{GrowRRT}{\G,\X}$
        \If{$\Call{IsConnected}{x_I,x_G,\G}$}
        \State \Return \Call{Path}{$x_I,x_G,\G$}
        \EndIf
      \EndWhile
     \State \Return $\emptyset$
    \end{algorithmic}
    \label{alg:rrt}
  \bottomrule
  \bigskip
  \algcaption{GrowRRT($\G,\X$)}
  \begin{algorithmic}[1]
    \State $\xr \gets \Call{Sample}{\X}$
    \State $\xn \gets \Call{Nearest}{\xr,\G}$
     \State $\xw \gets \Call{Connect}{\xn,\xr,\G}$
  \end{algorithmic}
  \label{alg:rrt_grow}
  \bottomrule
  \bigskip
%\end{figure}
%\begin{figure}
    \centering
    \algcaption{QRRT($x^1_I,\ldots,x^K_I,
    x^1_G,\ldots,x^K_G,
    X_1,\cdots,X_K$)}
\begin{algorithmic}[1]
  \State $\PriorityQueue \gets \Call{priority\_queue}{}$
  \For{$k=1$ to $K$}
    \State $Q_k=\Call{InitQuotient}{x^k_I, x^k_G, X_k,Q_{k-1}}$
    \def\Qbest{Q_{\text{least}}}
    \State $\PriorityQueue.\Call{push}{Q_k}$
    \While{$\path_k == \emptyset$ and $\neg\Call{ptc}{}$}
      \State $\Qbest = \PriorityQueue.\Call{pop}{}$
      \State $\Call{GrowQRRT}{\Qbest}$
      \State $\PriorityQueue.\Call{push}{\Qbest}$
      \If{$\Call{IsConnected}{Q_k}$}
      	\State $\path_k = \Call{Path}{Q_k}$
      \EndIf
    \EndWhile
  \EndFor
\State \Return $\path_K$
\end{algorithmic}\label{alg:qrrt}
\bottomrule
\bigskip
%%%%%%%%%%%%%%%%%%%%%%%%%%%%%%%%%%%%%%%%%%%%%%%
\algcaption{InitQuotient($x^k_I,x^k_G,X_k,Q_{k-1}$)}\label{alg:qmpinit}
\begin{algorithmic}[1]
  \State \Return $\{X_k, 
  x^k_G,
  \G_k=\Call{Tree}{x^k_I},
  \G_{k-1},
  C_k=\bigslant{X_k}{X_{k-1}}
   \}$
\end{algorithmic}
\label{alg:qrrt_init}

\bottomrule
\bigskip
%%%%%%%%%%%%%%%%%%%%%%%%%%%%%%%%%%%%%%%%%%%%%%%

  \algcaption{GrowQRRT($Q_k$)}
  \begin{algorithmic}[1]
    \State $\xr \gets \Call{SampleQuotient}{\G_{k-1}} \circ \Call{Sample}{C_k}$
    \State $\xn \gets \Call{Nearest}{\xr,\G_k}$
     \State $\xw \gets \Call{Connect}{\xn,\xr,\G_k}$
  \end{algorithmic}
  \label{alg:qrrt_grow}
\bottomrule
\end{figure}
\subsection{Probabilistic Completeness}

The QRRT algorithm is probabilistically complete. A motion planning algorithm is probabilistically complete, if the probability that the algorithm will find a path (if one exists) goes to one as the number of samples goes to infinity. This property has been proven for sampling-based planners \cite{svestka_1996}, in particular RRT \cite{kuffner_2000}.

For sampling-based planners, probabilistic completeness is often proven using a two-step method \cite{svestka_1996}. First, we show that every open set in the configuration space is sampled at least once. Second, we show any feasible paths will be found by the series-of-balls argument \cite{svestka_1996,berenson_2011}, where a feasible path is covered by a series of overlapping open sets, and we show the tree can be expanded along them by sampling in the intersection of the open sets.

This two-step method needs to be generalized to quotient-space sequences. Our approach is to change the first step, by showing that any open set in the \emph{free} $K$-th quotient-space will be sampled at least once. Completeness directly follows by the series-of-balls argument.

To prove that any open set in $X_{K,\free}$ is sampled at least once, we show the sampling sequence of Algorithm \ref{alg:qrrt_grow} is dense\footnote{A set $S$ is dense in $X$ if for any non-empty open subset $V$ of $X$, the intersection $S \cap V$ is non-empty \cite{munkres_2000}.}.
Let $\alpha_k=\{\alpha^n_k, n\in \N\}$ be a sampling sequence on the $k$-th quotient-space. Algorithm \ref{alg:qrrt_grow} defines $\alpha_k$ such that $\alpha_k$ is dense in $\pi^{-1}_{k}(\alpha_{k-1})$, and $\alpha_1$ is dense in $X_1$. From this definition we like to show that $\alpha_K$ is dense in $X_{K,\free}$. 

\begin{theorem}
$\alpha_K$ is dense in $X_{K,\free}$ for $K \geq 1$.
\label{theorem:dense_sampling}
\end{theorem}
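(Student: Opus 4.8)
The plan is to prove the statement by induction on $K$, using the free-space expansion property of Theorem~\ref{theorem:lower_dimensional_projection} together with the hypothesis that each projection $\pi_k$ maps open sets to open sets. Throughout I read the recursion as $\pi_k : X_k \rightarrow X_{k-1}$, so that $\pi_k^{-1}(\alpha_{k-1})$ lives in $X_k$, and I rely on the standing assumption (standard in this setting) that each free space $X_{k,\free}$ is open in $X_k$.

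For the base case $K=1$, I would simply note that $\alpha_1$ is dense in $X_1$ by definition. Since $X_{1,\free}$ is open in $X_1$, any nonempty $V$ that is open in $X_{1,\free}$ is already open in $X_1$, so $\alpha_1 \cap V \neq \emptyset$; hence $\alpha_1$ is dense in $X_{1,\free}$.

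For the inductive step, assume $\alpha_{k-1}$ is dense in $X_{k-1,\free}$, and let $V \subseteq X_{k,\free}$ be an arbitrary nonempty open set. The idea is to push $V$ down to the lower quotient-space, locate a sample of $\alpha_{k-1}$ there, and lift back into $V$. Concretely: (i) by Theorem~\ref{theorem:lower_dimensional_projection} we have $X_{k,\free} \subseteq \pi_k^{-1}(X_{k-1,\free})$, so $\pi_k(V) \subseteq X_{k-1,\free}$; (ii) since $\pi_k$ sends open sets to open sets, $\pi_k(V)$ is a nonempty open subset of $X_{k-1,\free}$, and the inductive hypothesis yields a sample $a \in \alpha_{k-1} \cap \pi_k(V)$; (iii) choosing a preimage $v \in V$ with $\pi_k(v)=a$ shows that $V \cap \pi_k^{-1}(\alpha_{k-1})$ is nonempty, and being the intersection of the $X_k$-open set $V$ with $\pi_k^{-1}(\alpha_{k-1})$ it is open in the subspace $\pi_k^{-1}(\alpha_{k-1})$; (iv) applying density of $\alpha_k$ in $\pi_k^{-1}(\alpha_{k-1})$ produces a point of $\alpha_k$ inside $V \cap \pi_k^{-1}(\alpha_{k-1}) \subseteq V$. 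Thus $\alpha_k \cap V \neq \emptyset$, which closes the induction.

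The crux, and the step I expect to need the most care, is the bridge between the two different density notions living in different topologies: $\alpha_{k-1}$ dense in $X_{k-1,\free}$ versus $\alpha_k$ dense in the preimage set $\pi_k^{-1}(\alpha_{k-1})$ with its subspace topology. Two ingredients make the bridge work and deserve explicit justification. First, the free-space inclusion $X_{k,\free} \subseteq \pi_k^{-1}(X_{k-1,\free})$ from Theorem~\ref{theorem:lower_dimensional_projection} is exactly what keeps $\pi_k(V)$ inside $X_{k-1,\free}$, i.e.\ inside the region where the inductive hypothesis can be invoked. Second, the open-map hypothesis on $\pi_k$ is essential: without it $\pi_k(V)$ need not be open, and density of $\alpha_{k-1}$ could not be applied to it. I would therefore flag both the role of admissibility (through Theorem~\ref{theorem:lower_dimensional_projection}) and the openness of the projection as the load-bearing assumptions, and I would state the openness of each $X_{k,\free}$ up front so that density transfers cleanly to open subsets in the base case and $V$ may be treated as open in $X_k$ in the inductive step.
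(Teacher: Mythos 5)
Your proof is correct and follows essentially the same route as the paper's: induction on $K$, pushing the open set $V$ down through the open projection, invoking the induction hypothesis to obtain a sample of $\alpha_{K-1}$ in the image of $V$, and lifting back up via the density of $\alpha_K$ in $\pi^{-1}_{K-1}(\alpha_{K-1})$. If anything, your version is tighter than the paper's: where the paper picks ``an open set $M$ of the fiber'' (whose openness in the subspace $\pi^{-1}_{K-1}(\alpha_{K-1})$ and containment in $V$ are left unargued), you work directly with $V \cap \pi^{-1}_{K-1}(\alpha_{K-1})$, and you make explicit both the appeal to Theorem~\ref{theorem:lower_dimensional_projection} that keeps $\pi_{K-1}(V)$ inside $X_{K-1,\free}$ and the openness assumption on $X_{K,\free}$, both of which the paper uses only implicitly.
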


\begin{proof}
By induction for $K=1$, $\alpha_K$ is dense in $\X_1$ by definition, and therefore dense in $X_{1,\free}\subseteq \X_1$. Assume $\alpha_{K-1}$ is dense in $X_{K-1,\free}$. Let $V$ be a non-empty open subset of $X_{K,\free}$. 
Since $V$ is open, $\pi_{K-1}(V)$ is open. By induction assumption there exists a $y$ in $\alpha_{K-1} \cap \pi_{K-1}(V)$.
Consider an open set $M$ of the fiber $\pi^{-1}_{K-1}(y)$ (Note $M$ might be closed in $X_{K,\free}$). Since $\alpha_K$ is dense in $\pi^{-1}_{K-1}(\alpha_{K-1})$, there exists an $x$ in $\alpha_K \cap M$ which is a subset of $V$. Since $V$ was arbitrary, $\alpha_K$ is dense in $X_{K,\free}$.
\end{proof}

%\begin{theorem}
%QRRT is probabilistically complete.\label{theorem:completeness}
%\end{theorem}

%\begin{proofsketch}

%\end{proofsketch}

\section{Experiments}

We evaluate QRRT in two parts. First, we take four robotic systems and compare how QRRT performs using different sequential simplifications. Second, we investigate under which circumstances sequential simplifications outperform the trivial simplification. 

\subsection{QRRT with Efficient Sequential Simplifications\label{sec:experiment_1}}

We evaluate the runtime of QRRT using different sequential simplifications on four robotic systems. We use a fixed-base planar 8-dof manipulator robot, a free-floating planar 8-dof serial linkage, a fixed-base spatial 7-dof KUKA manipulator, and a free-floating spatial 10-dof serial linkage. 

For each robot, we define a set of $J \in \N$ admissible projections. Those projections can be combined to obtain $N(J)$ different sequential simplifications. The number $N(J)$ is given by all combinations of the set $S=\{1,\cdots,J\}$, and can be computed as

\begin{equation}
    N(J) = \sum\limits_{j=0}^{J} \binom{J}{j}
    \label{eq:binom}
\end{equation}

\noindent which is a sum of binomial coefficients known to be equivalent to $2^{J}$.

\subsubsection{Fixed-Base Planar}

In our first experiment, we consider an 8-dof planar manipulator with configuration space $\R^8$. 
We define seven efficient projections onto quotient-spaces $\R^{\{1,\cdots,7\}}$, each obtained by removing links from the original robot. This is visualized in Fig. \ref{fig:results_planar_manipulator_subspaces}, where we show the body of the robot at the projected start (green) and projected goal (red) configuration. Note that only removing end links from the robot results in an \emph{efficient} ALDS, whereas removing intermediate links results in a canonical ALDS for which the admissible constraint function $\phi_Y(y)$ in general requires to evaluate every member of the fiber $\pi^{-1}(y)$. The number of sequential simplifications obtained from the seven projections is $2^7=128$. 

\renewcommand\subfigE[2]{
\begin{subfigure}[b]{0.23\linewidth}
   \includegraphics[width=\linewidth]{#1}
   \caption{#2}
\end{subfigure}
}
\begin{figure}[!ht]
    \centering
    \subfigE{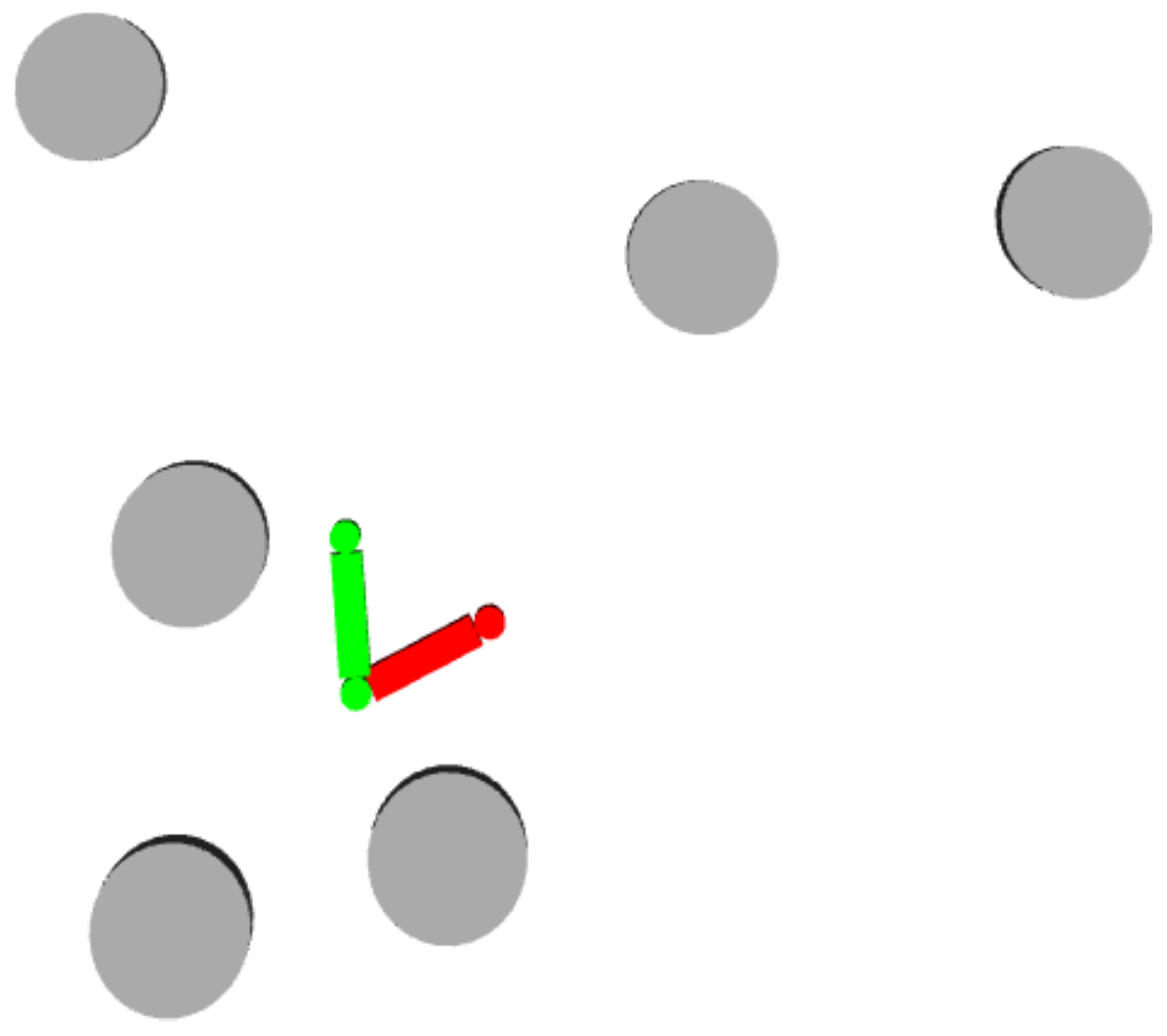}{$\R^1$}
    \subfigE{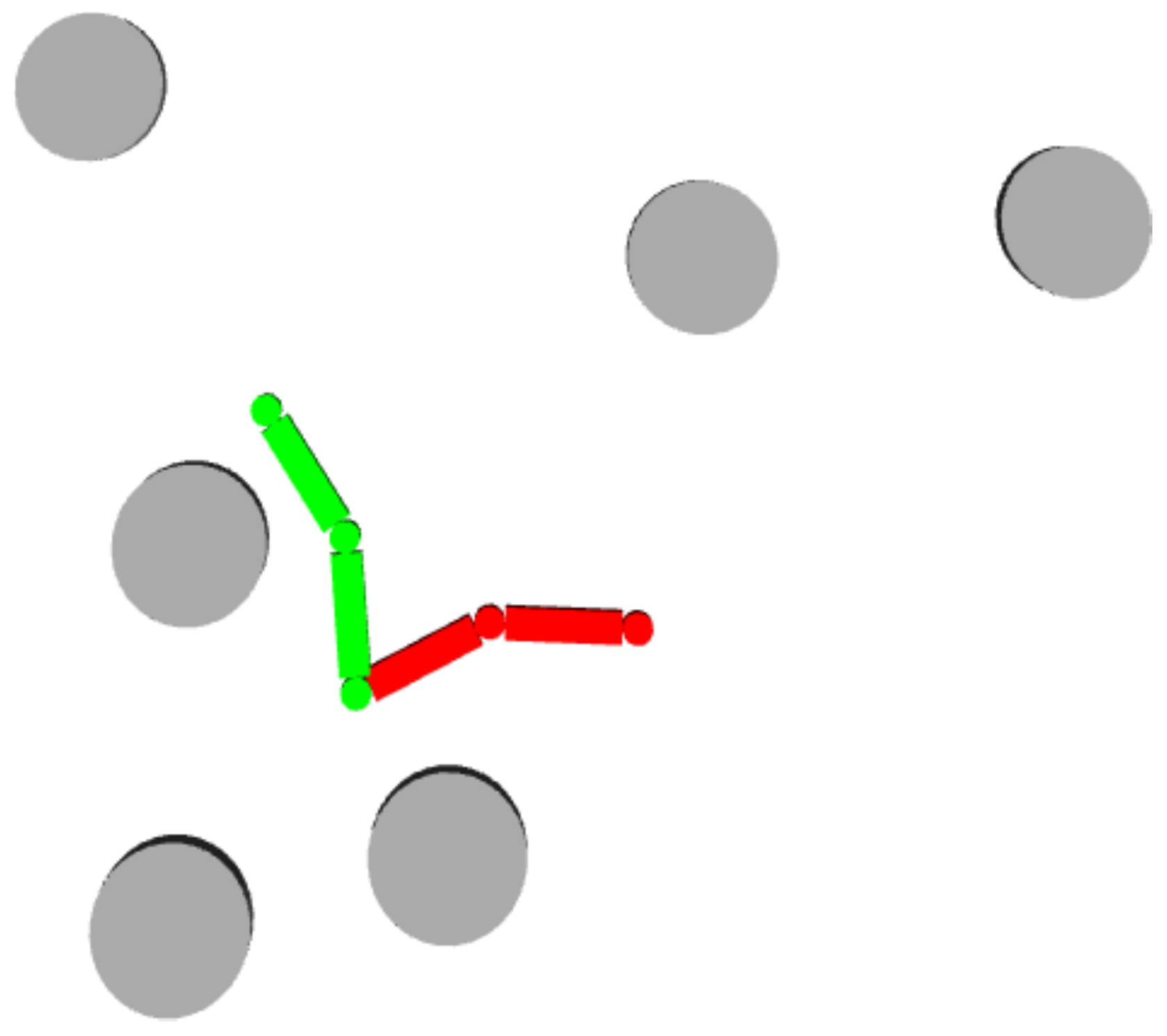}{$\R^2$}
    \subfigE{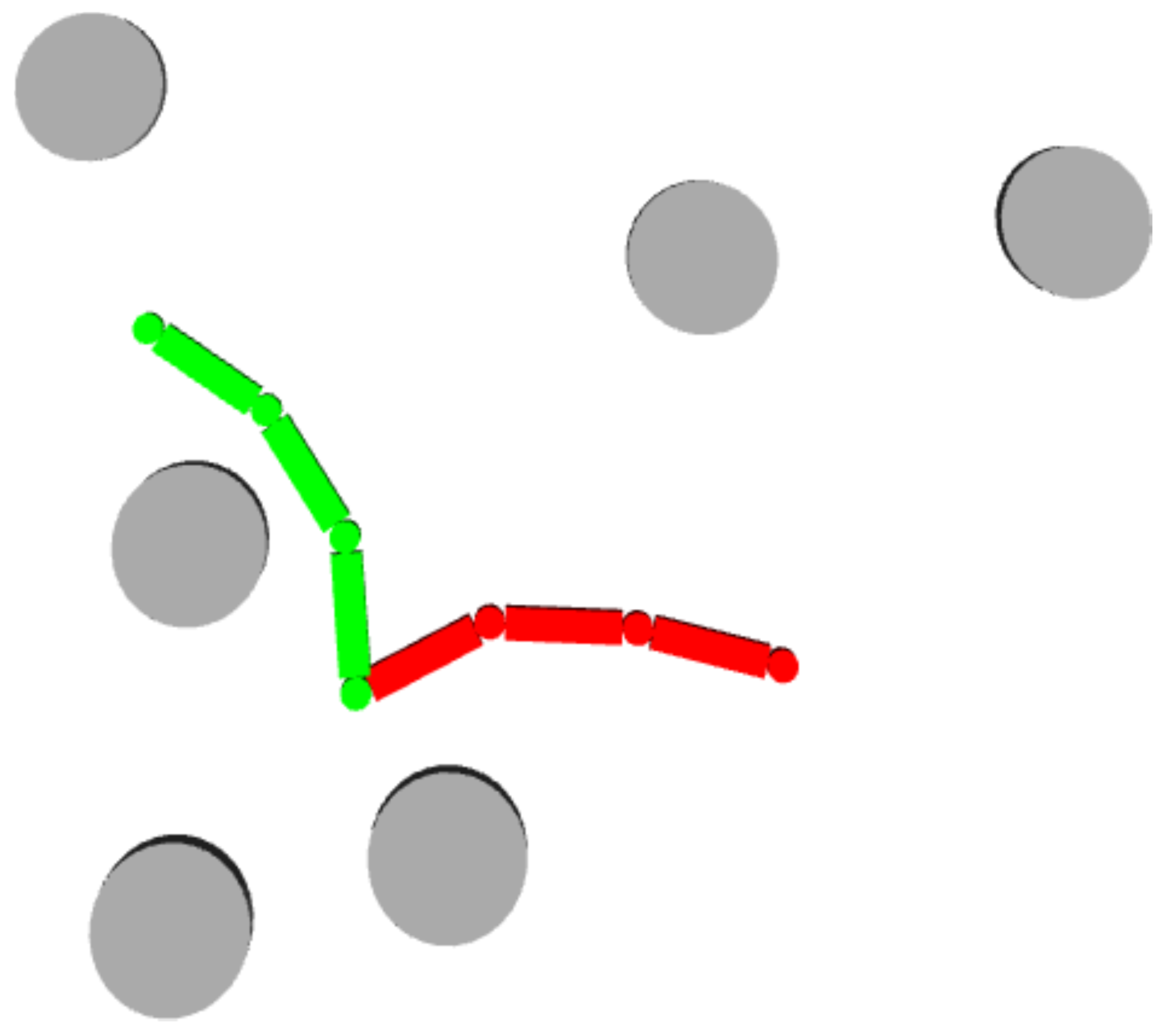}{$\R^3$}
    \subfigE{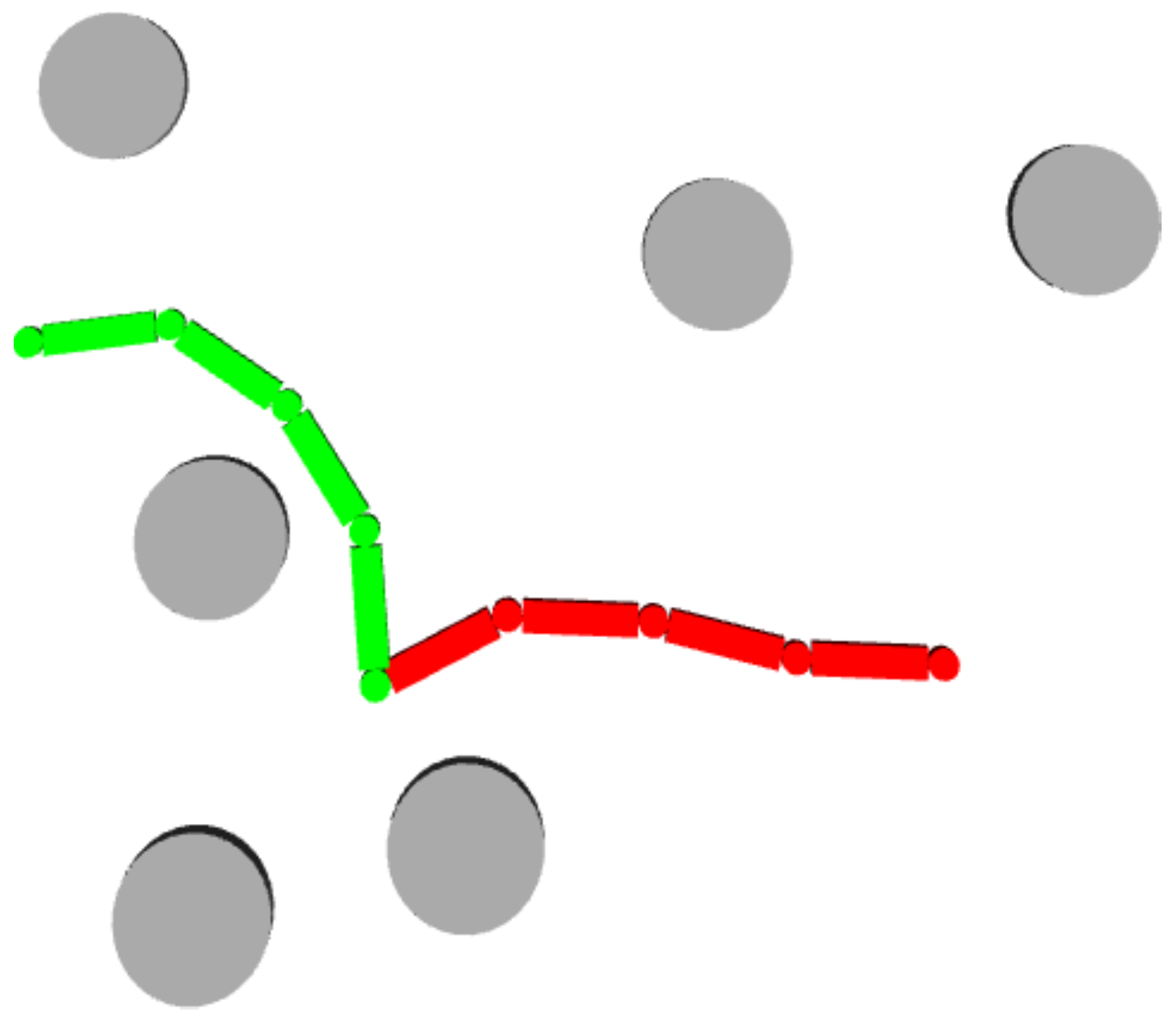}{$\R^4$}
    \subfigE{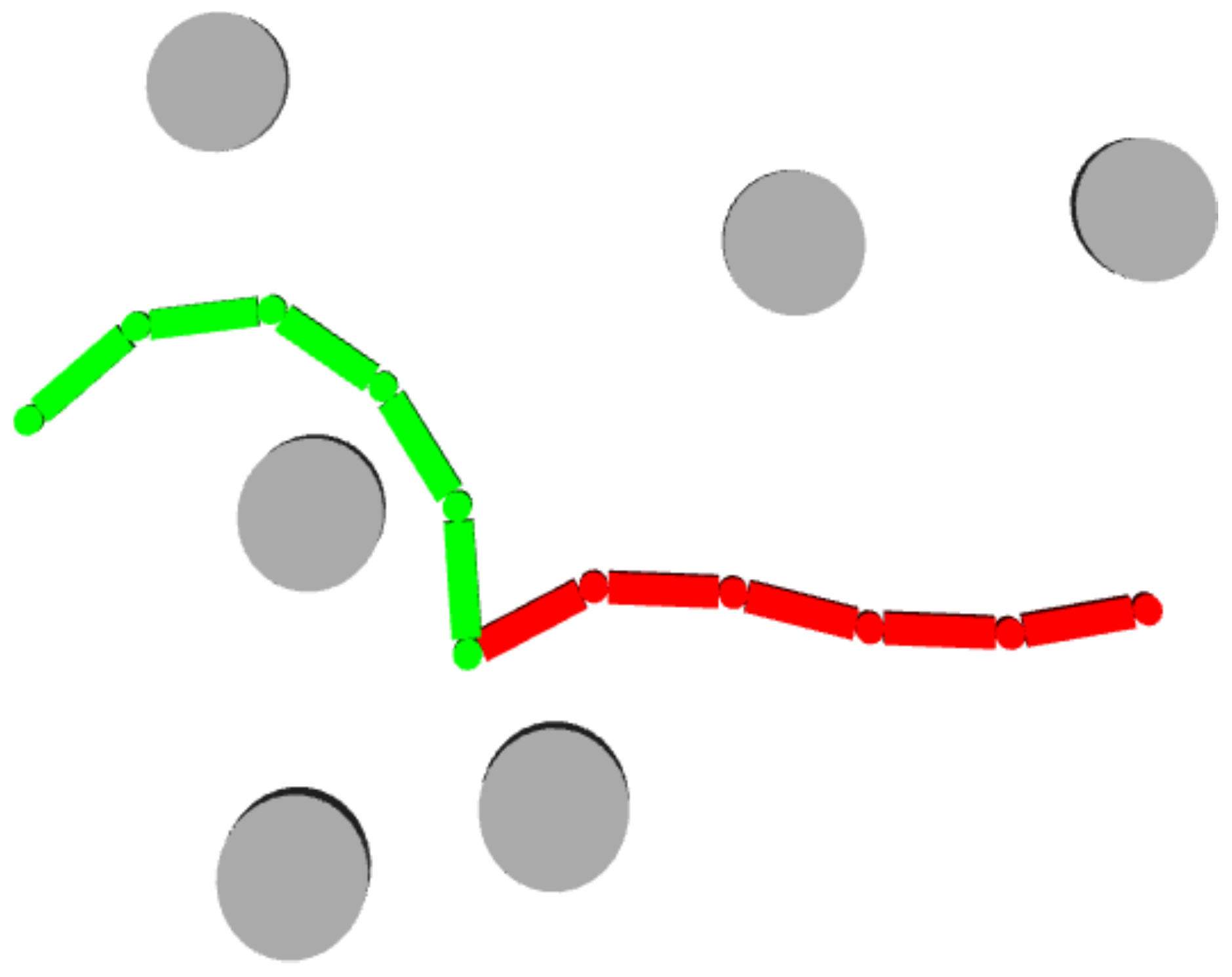}{$\R^5$}
    \subfigE{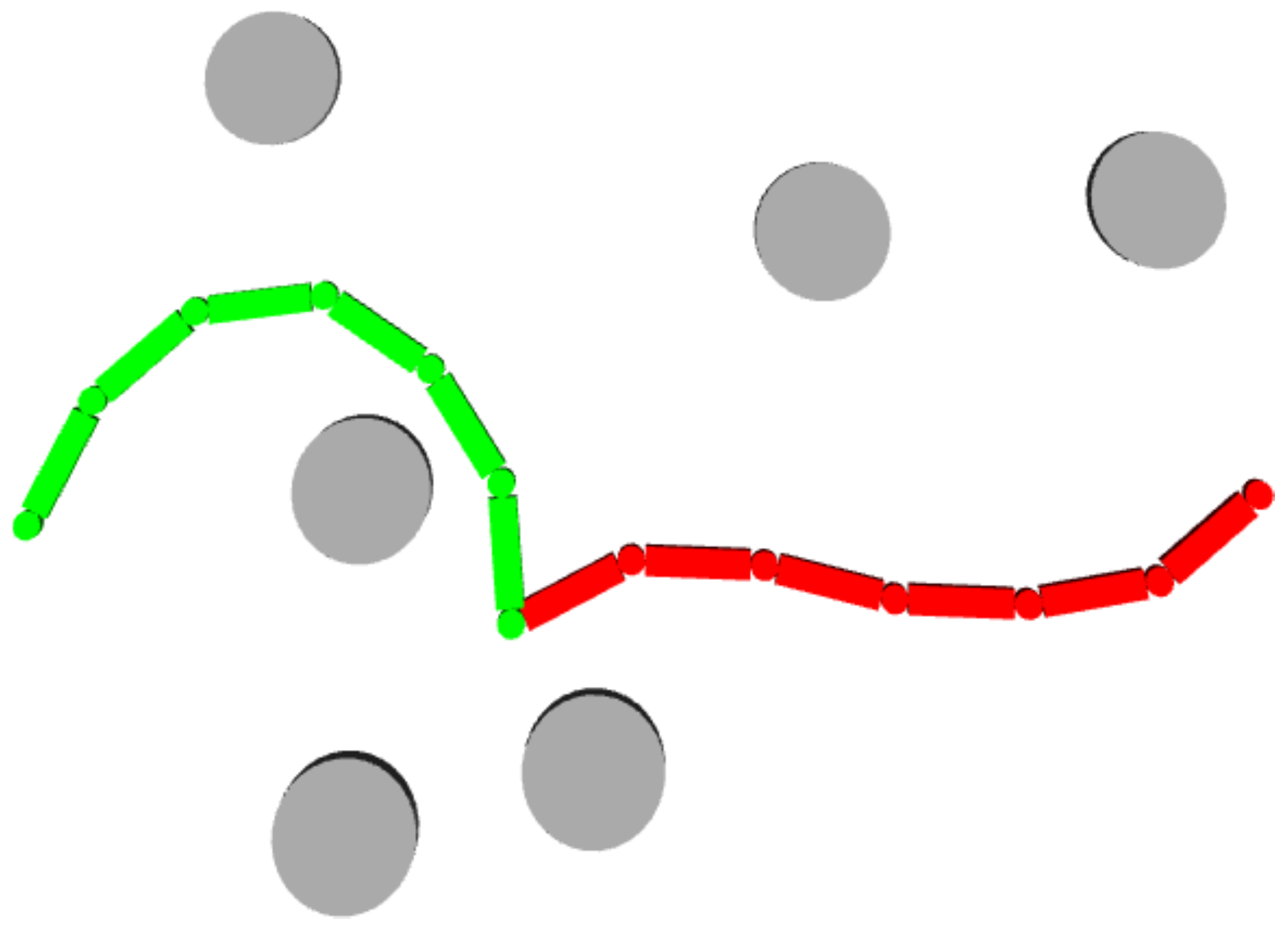}{$\R^6$}
    \subfigE{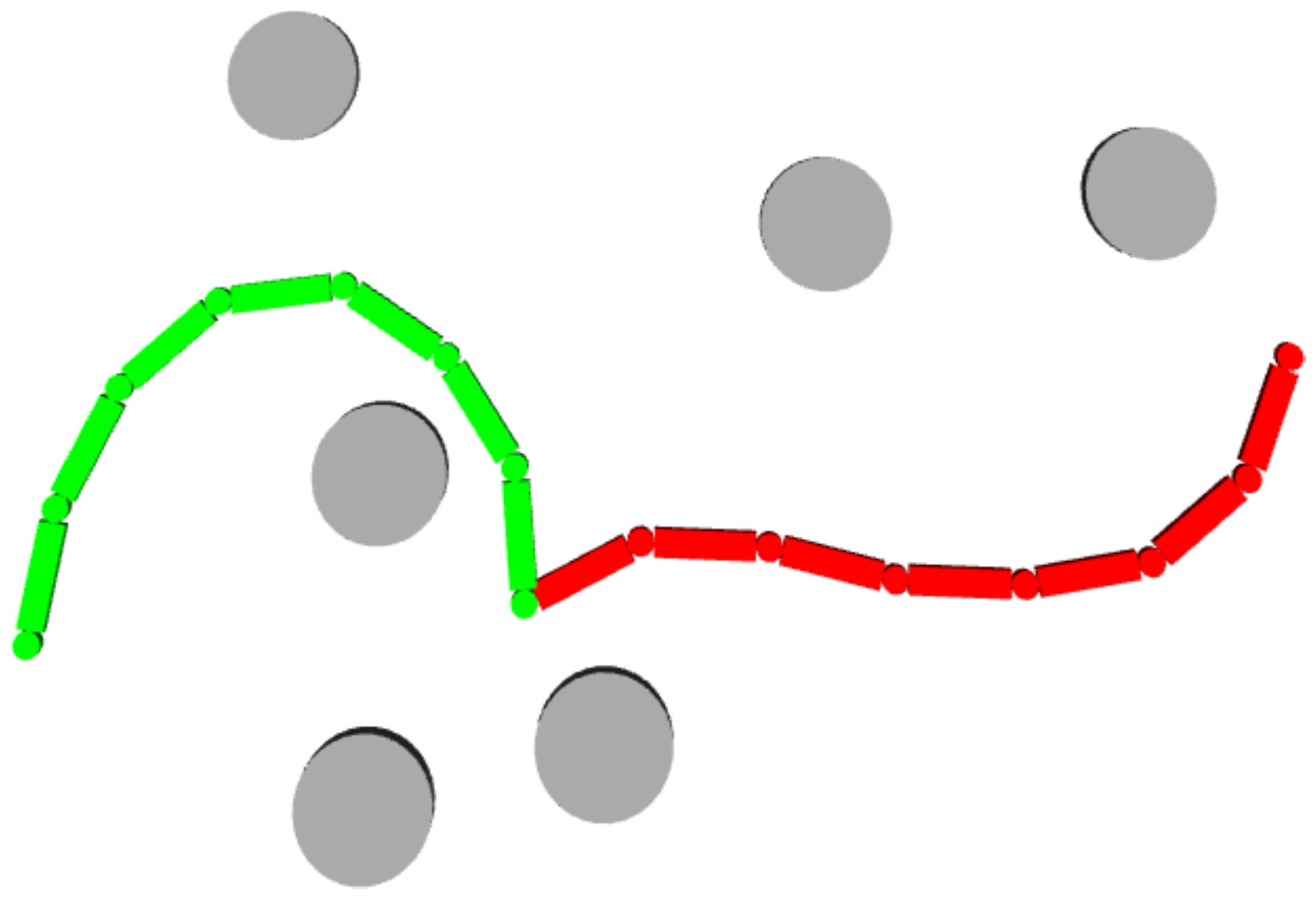}{$\R^7$}
    \subfigE{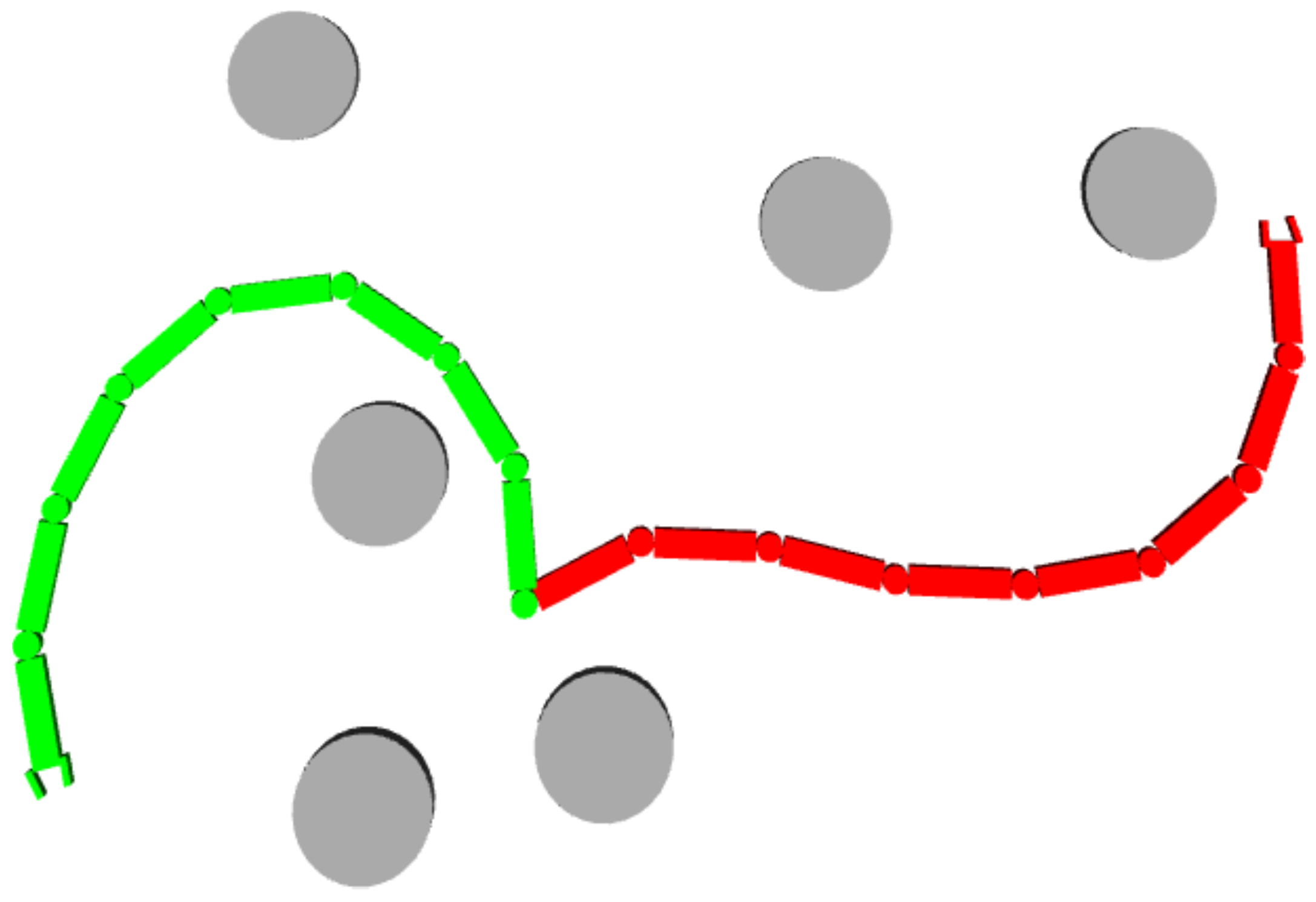}{$\R^8$}    
    
    \caption{The start (green) and goal (red) configuration of the 8-dof manipulator projected onto seven quotient-spaces $\R^{\{1,\cdots,7\}}$, and onto the configuration space $\R^8$.}
    \label{fig:results_planar_manipulator_subspaces}
\end{figure}

\begin{figure}[!ht]
    \centering
    \includegraphics[width=0.48\linewidth]{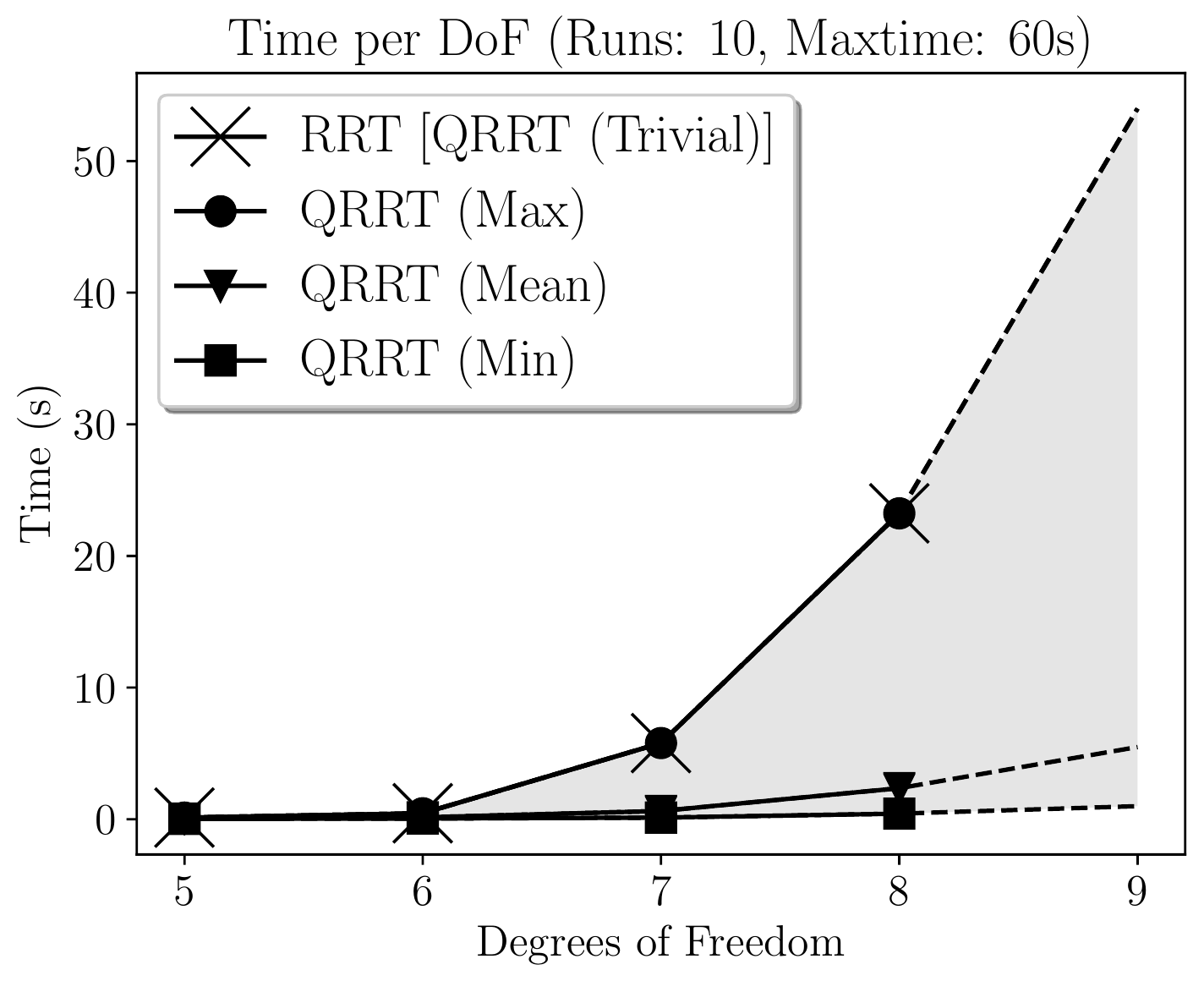}
    \includegraphics[width=0.48\linewidth]{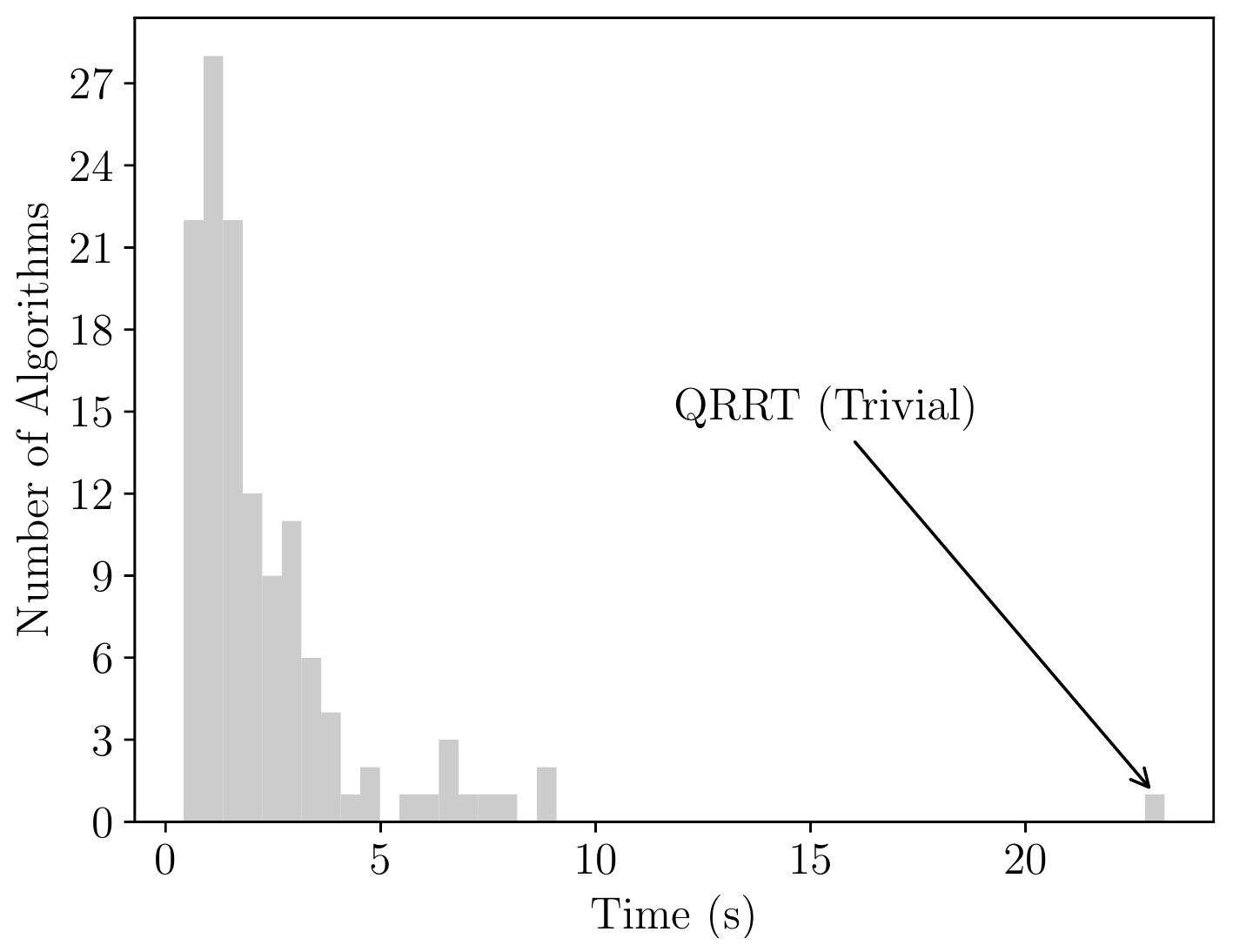}
    \caption{{\bf Left:}Runtime of QRRT on the planar manipulator scenario. For each number of DoFs, we enumerated all possible sequential simplifications, let QRRT run 10 times, and average the runtime. We then show the minimum, the maximum, the mean and the runtime of the trivial simplification (RRT).{\bf Right:} For the $8$ DoF case, we count the number of simplifications having similar runtime. See text for clarification.}
    \label{fig:results_planar_manipulator}
\end{figure}

We then run QRRT with each sequential simplification and compare the runtime. The runtime is obtained by averaging over $10$ runs with a time limit of $60$s. To observe the runtime when the dofs increase, we additionally run the algorithm on a $5$,$6$, and $7$-dof version of the problem, where we have $16,32$ and $64$ sequences, respectively. The outcome is shown in Fig. \ref{fig:results_planar_manipulator} (left). Over all the QRRT runtimes, we compute the minimum runtime (rectangle marker), the maximum (disk), the mean (triangle), and the runtime of the trivial simplification (cross). Note that the trivial simplification is equivalent to RRT. We observe that the algorithm using the trivial simplification performs worst, while at least one algorithm achieves a runtime of less than one second, an improvement of one order of magnitude.

In Fig. \ref{fig:results_planar_manipulator}(right) we visualize, for the $8$-dof case, the distribution of all different sequential simplifications. One can see that the majority of sequential simplifications has a runtime of less than $5$s, while the trivial simplification is at $24$s.

\subsubsection{Free-Floating Planar}

In the second experiment, we use a $7$-dof planar free-floating robot with configuration space $SE(2) \times \R^4$, and we define five efficient quotient-space projections as shown in Fig. \ref{fig:results_planar_snake_subspaces}. The results are shown in Fig. \ref{fig:results_planar_snake} (left) showing the trivial simplification to perform best with a runtime of around $1$ second for $7$ dofs. The distribution of simplifications is shown in Fig. \ref{fig:results_planar_snake} (right). In this case, the trivial simplification performed best with $1.5$s, compared to the worst with around $18$s.

\renewcommand\subfigE[2]{
\begin{subfigure}[b]{0.31\linewidth}
   \includegraphics[width=\linewidth]{#1}
   \caption{#2}
\end{subfigure}
}
\begin{figure}[!ht]
    \centering
    \subfigE{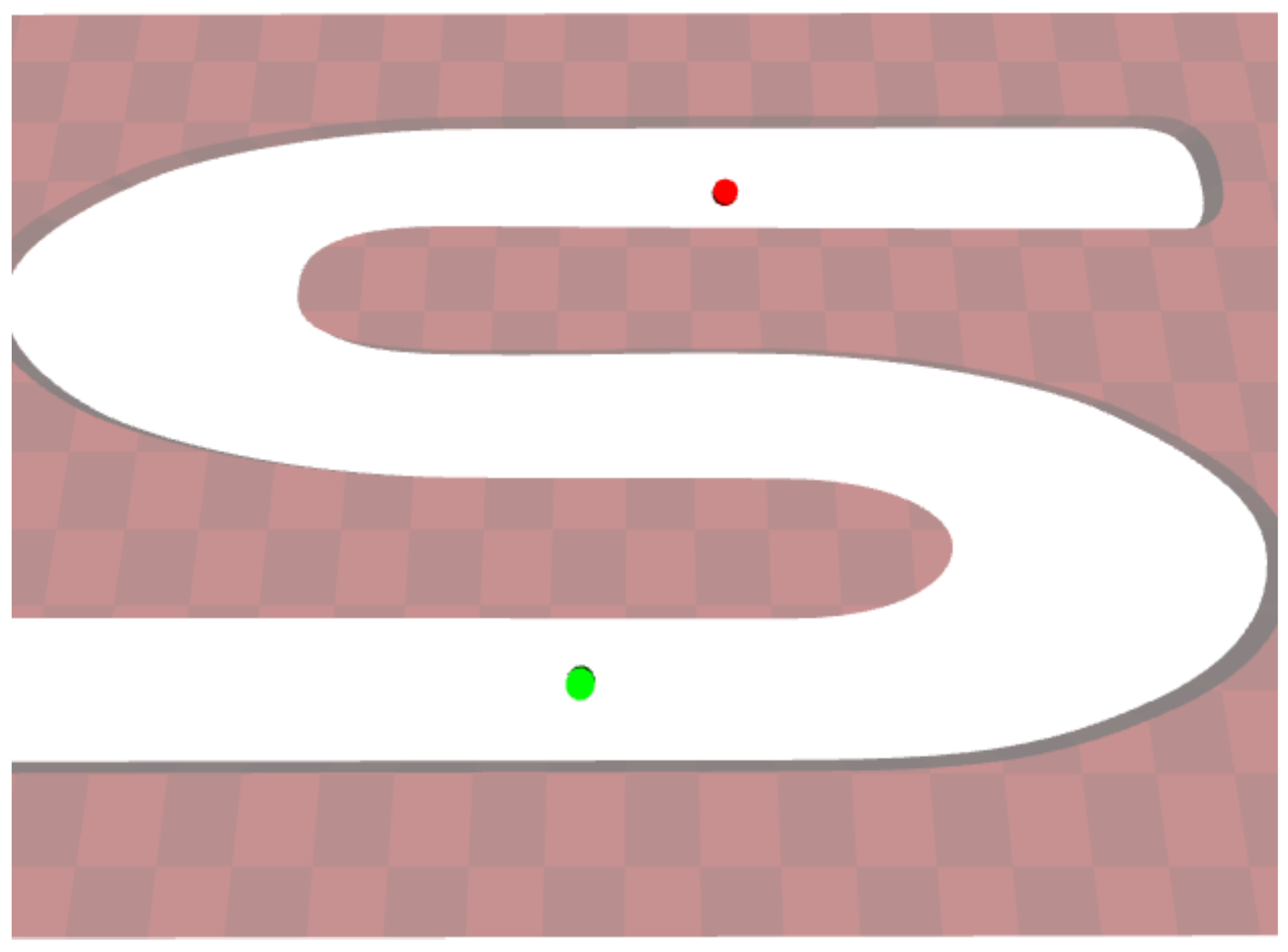}{$\R^2$}
    \subfigE{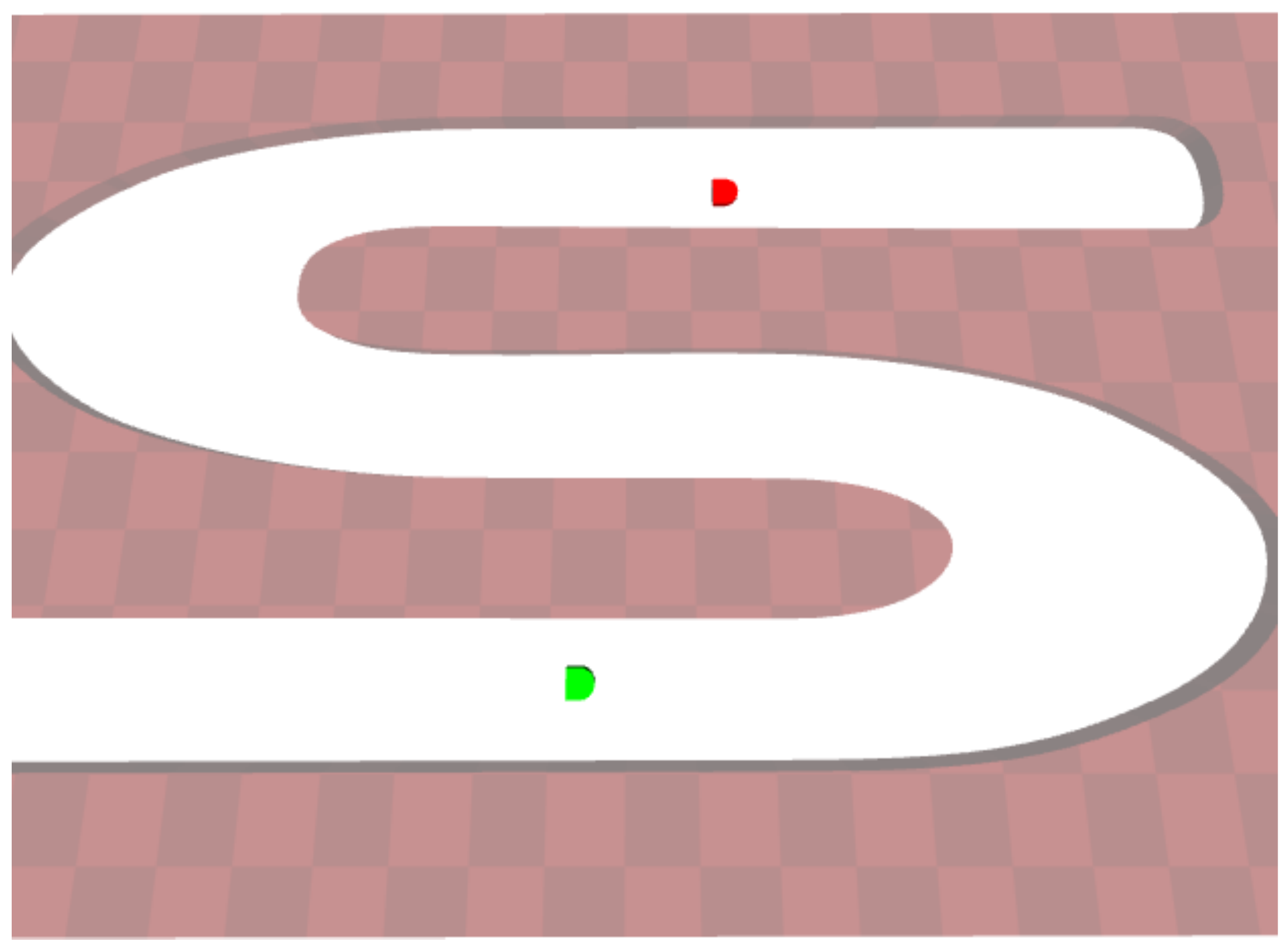}{$SE(2)$}
    \subfigE{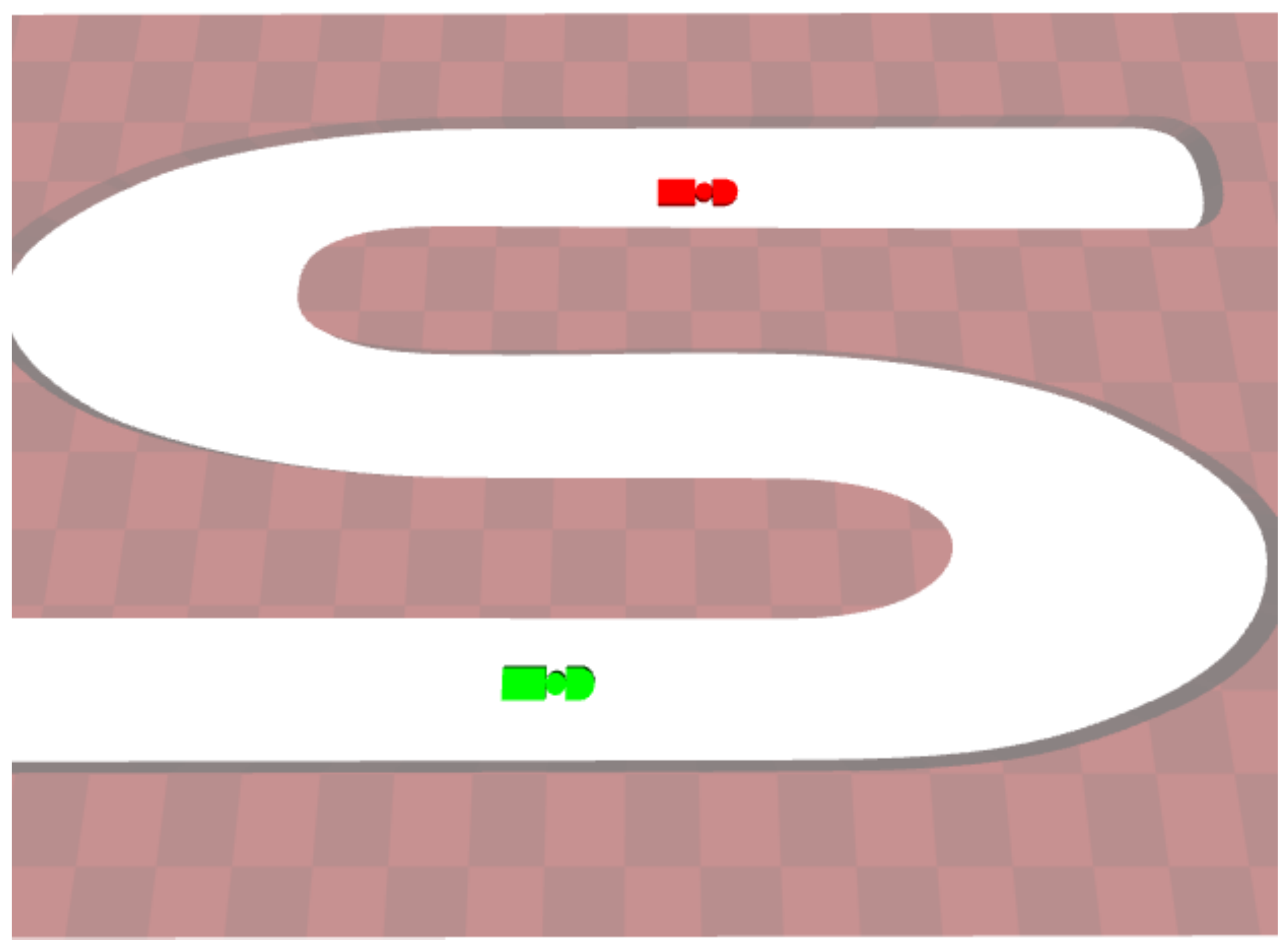}{$SE(2)\times R^1$}
    \subfigE{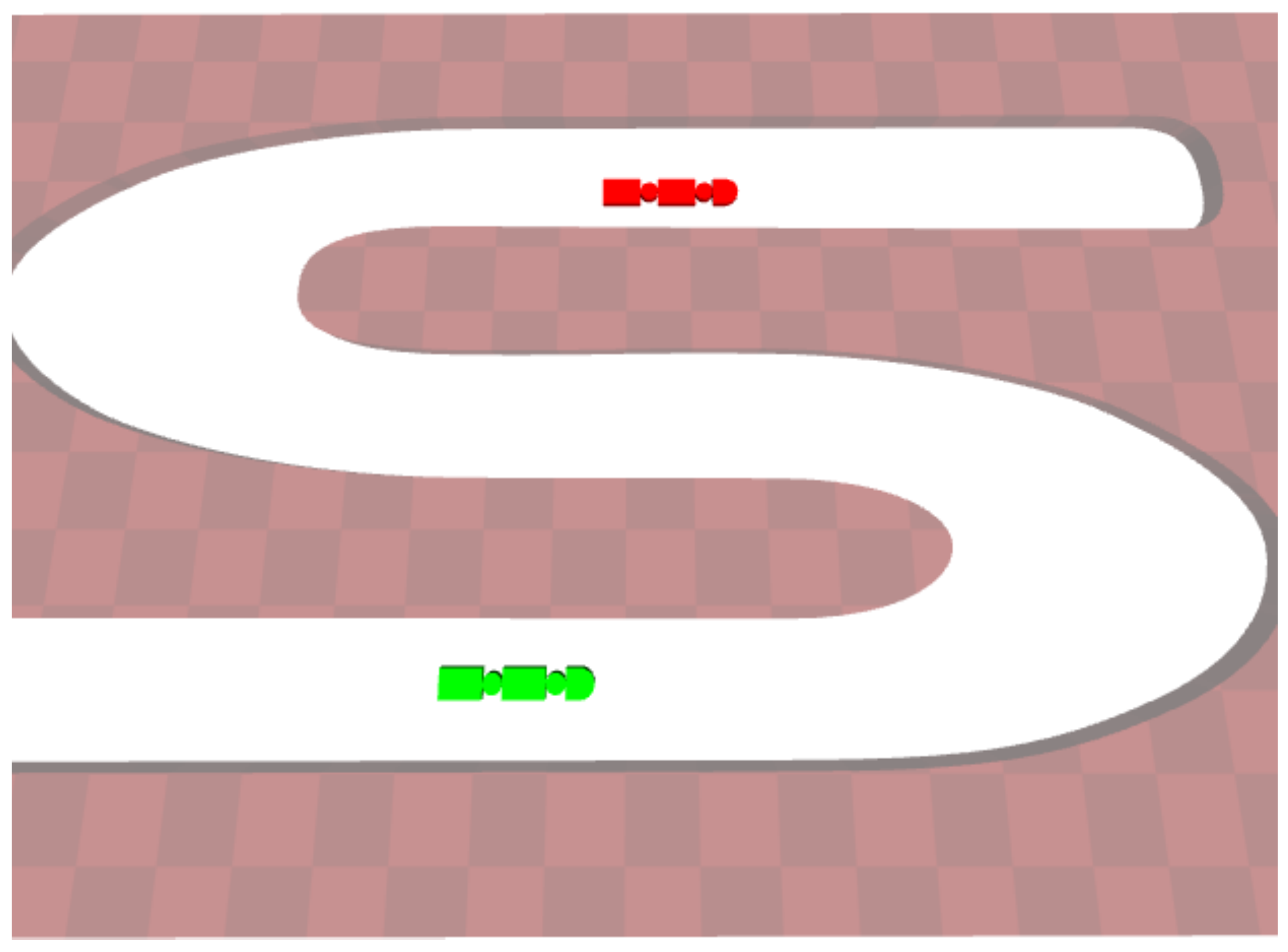}{$SE(2)\times R^2$}
    \subfigE{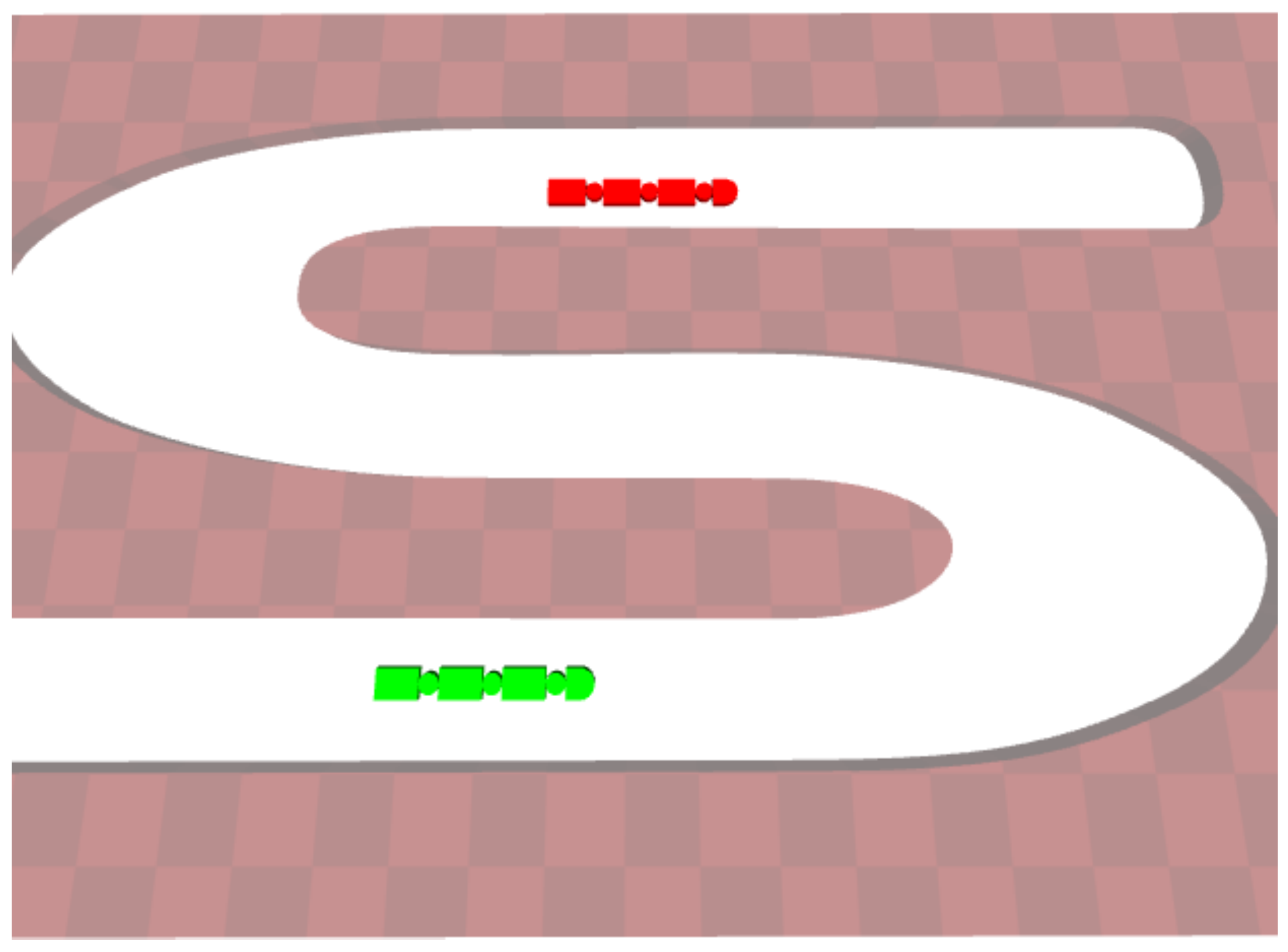}{$SE(2)\times R^3$}
    \subfigE{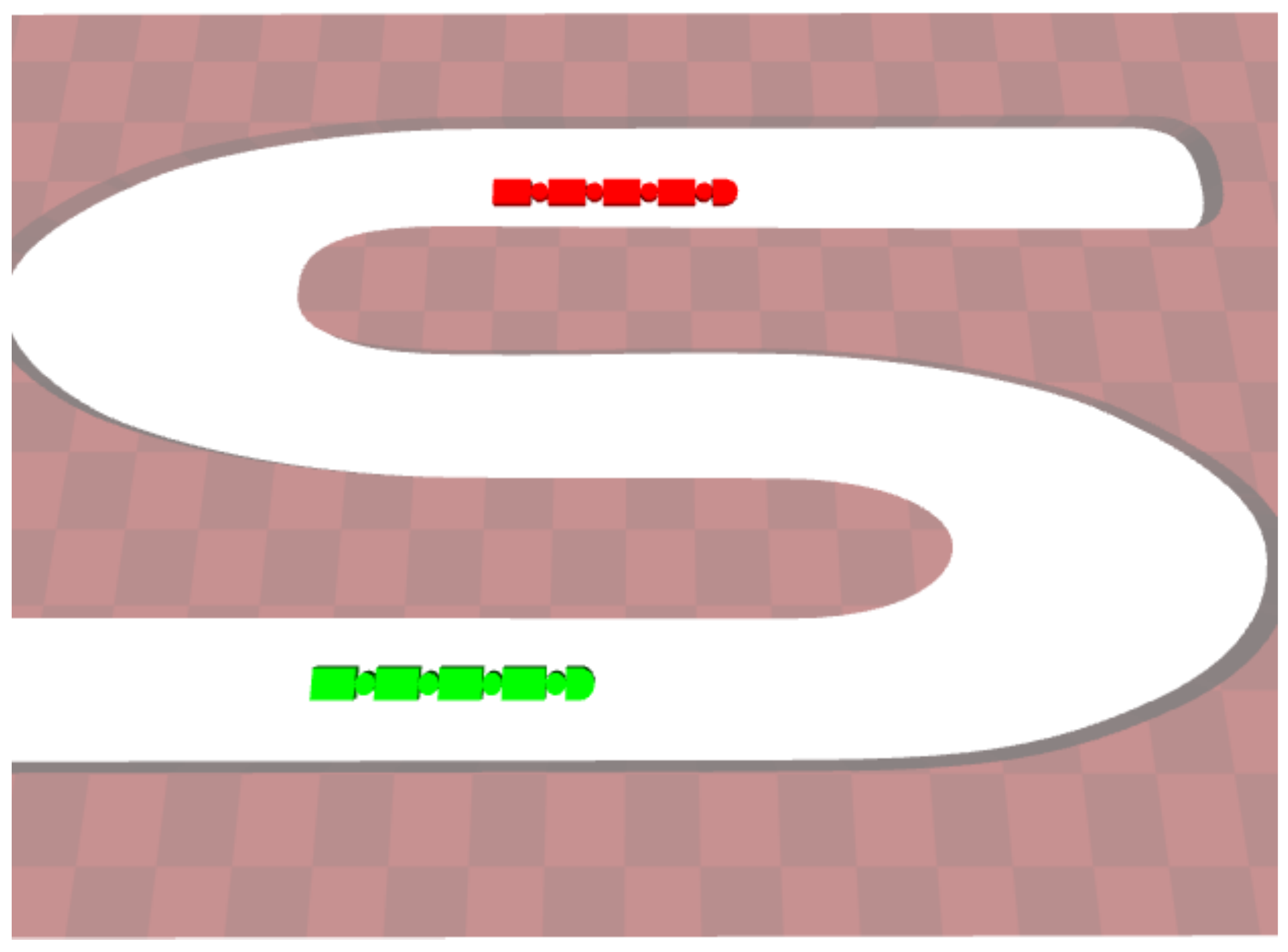}{$SE(2)\times R^4$} 
    
    \caption{The start (green) and goal (red) configuration of the 7-dof planar articulated body projected onto six quotient-spaces $\R^2$,$SE(2)$, and $SE(2)\times R^{\{1,\cdots,4\}}$.}
    \label{fig:results_planar_snake_subspaces}
\end{figure}

\begin{figure}[!ht]
    \centering
    \includegraphics[width=0.48\linewidth]{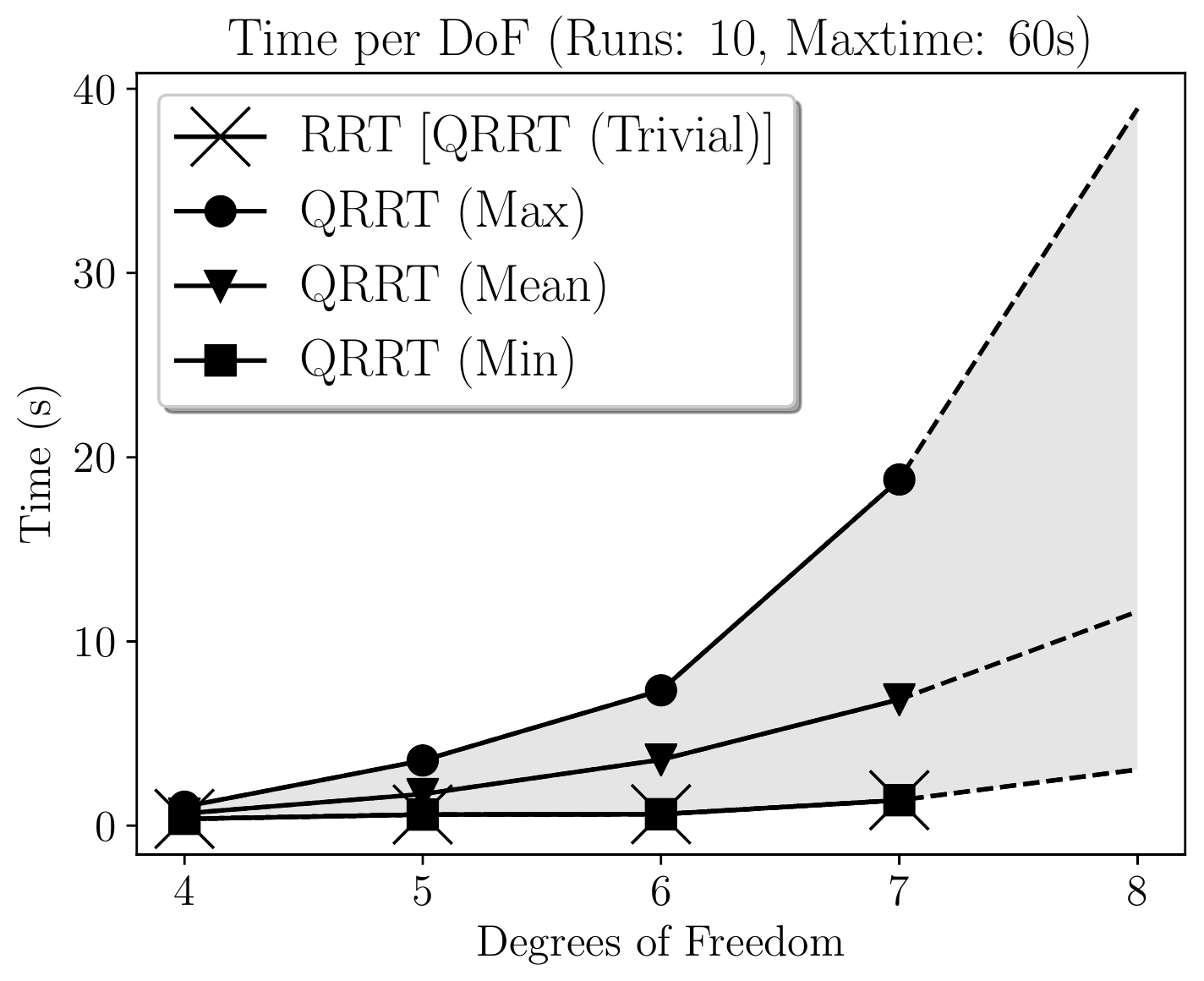} 
    \includegraphics[width=0.48\linewidth]{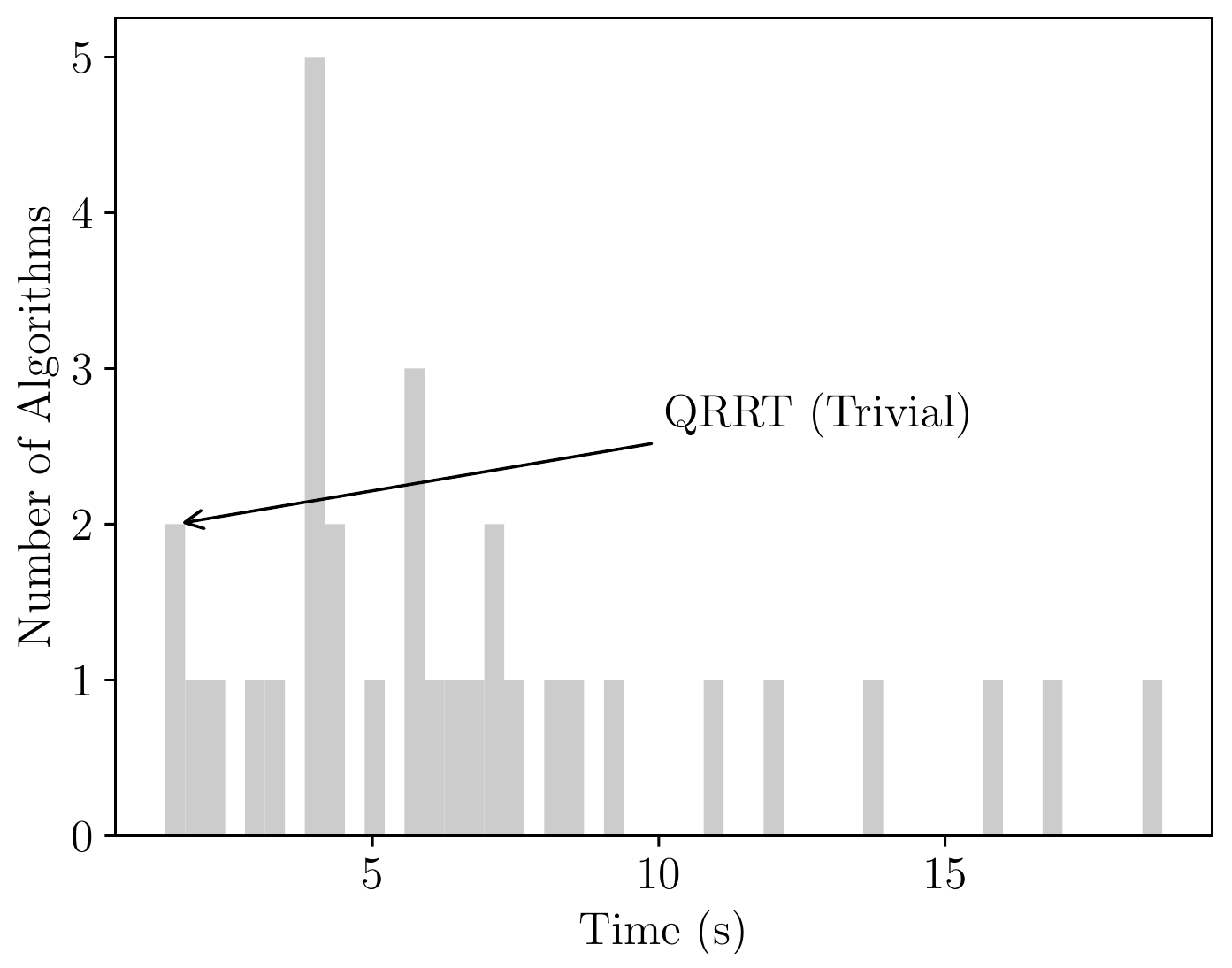}
    \caption{Runtime of QRRT using different sequential simplifications for the free-floating planar robot.}
    \label{fig:results_planar_snake}
\end{figure}

\subsubsection{Fixed-Base Spatial}

In our third experiment, we use a $7$-dof spatial manipulator with configuration space $\R^7$. We show all the efficient quotient-space mappings in Fig. \ref{fig:results_kuka_subspaces}. The results (with timelimit $300$s) are shown in Fig. \ref{fig:results_kuka} (left), where the runtime of all algorithms is around $180$ to $300$ seconds for $7$ dof (middle). The distribution of algorithms (right) shows that the trivial simplification performs near the mean.

\renewcommand\subfigE[2]{
\begin{subfigure}[b]{0.23\linewidth}
   \includegraphics[width=\linewidth]{#1}
   \caption{#2}
\end{subfigure}
}
\begin{figure}[!ht]
    \centering
    \subfigE{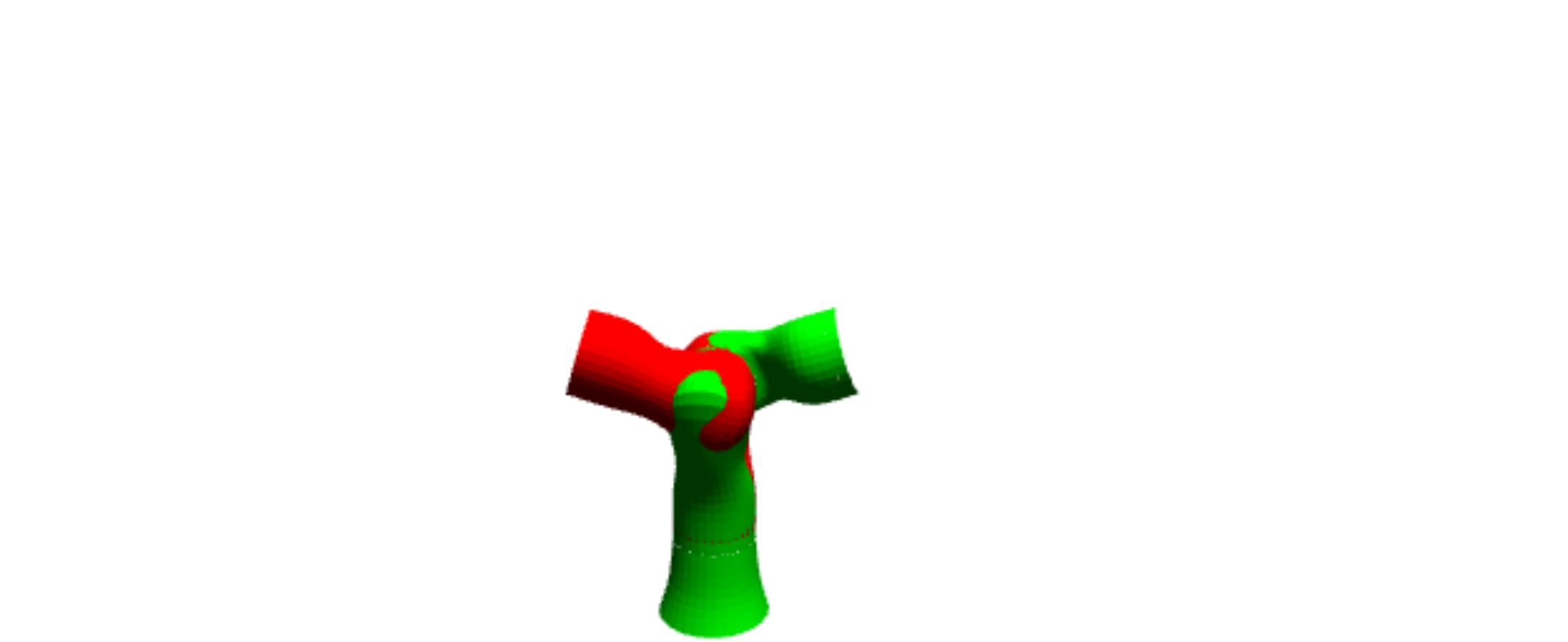}{$\R^1$}
    \subfigE{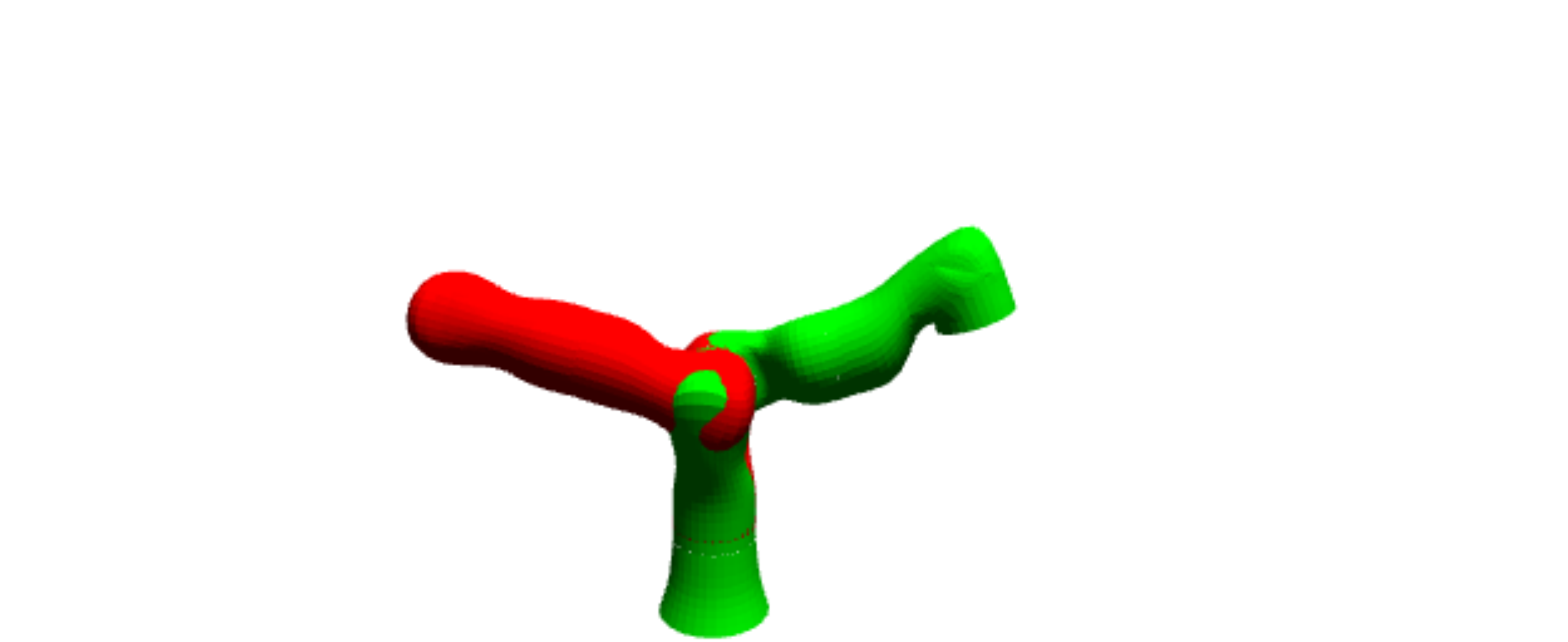}{$\R^2$}
    \subfigE{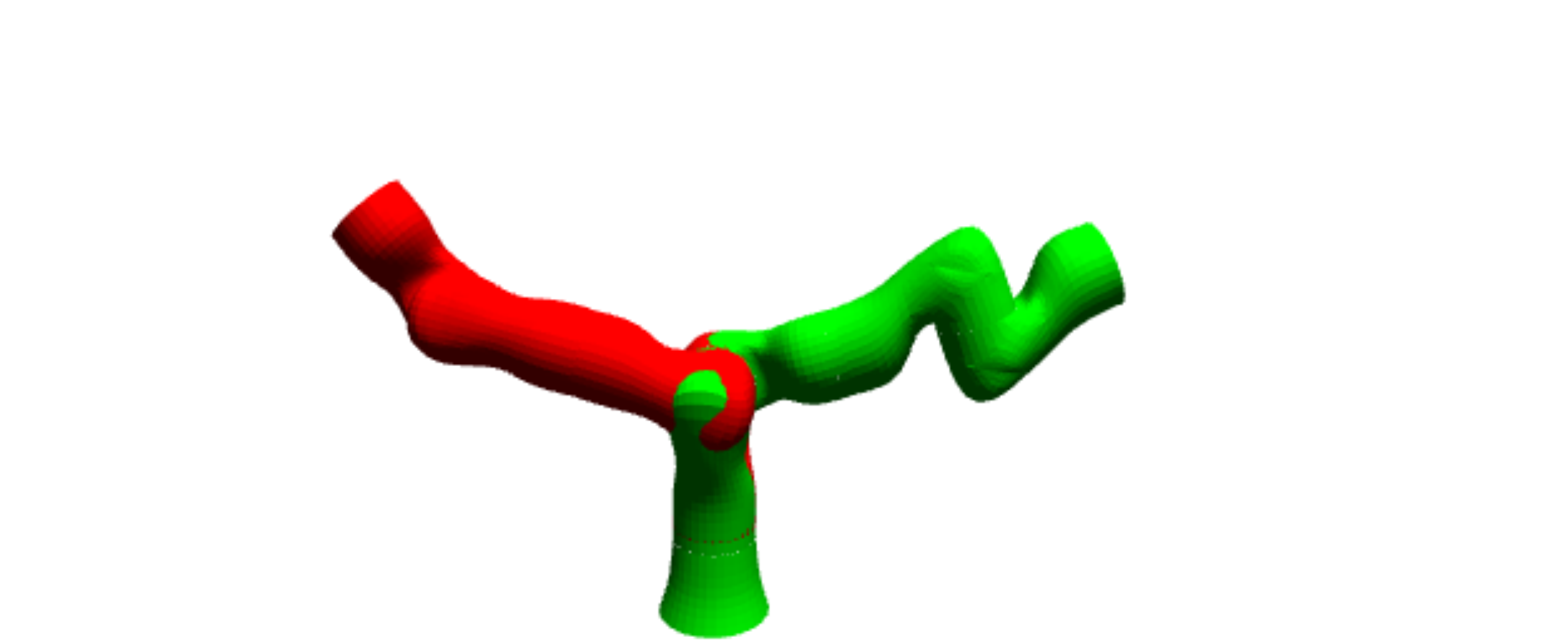}{$\R^3$}
    \subfigE{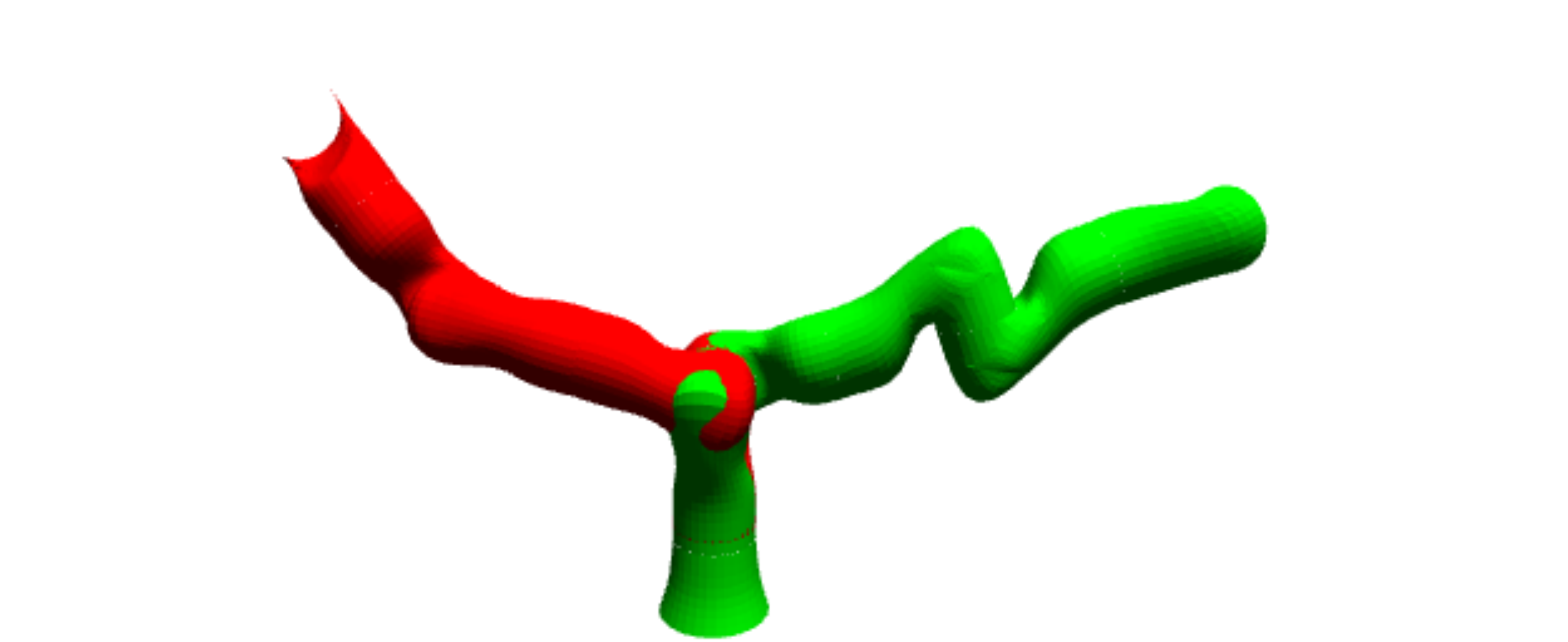}{$\R^4$}
    
    \subfigE{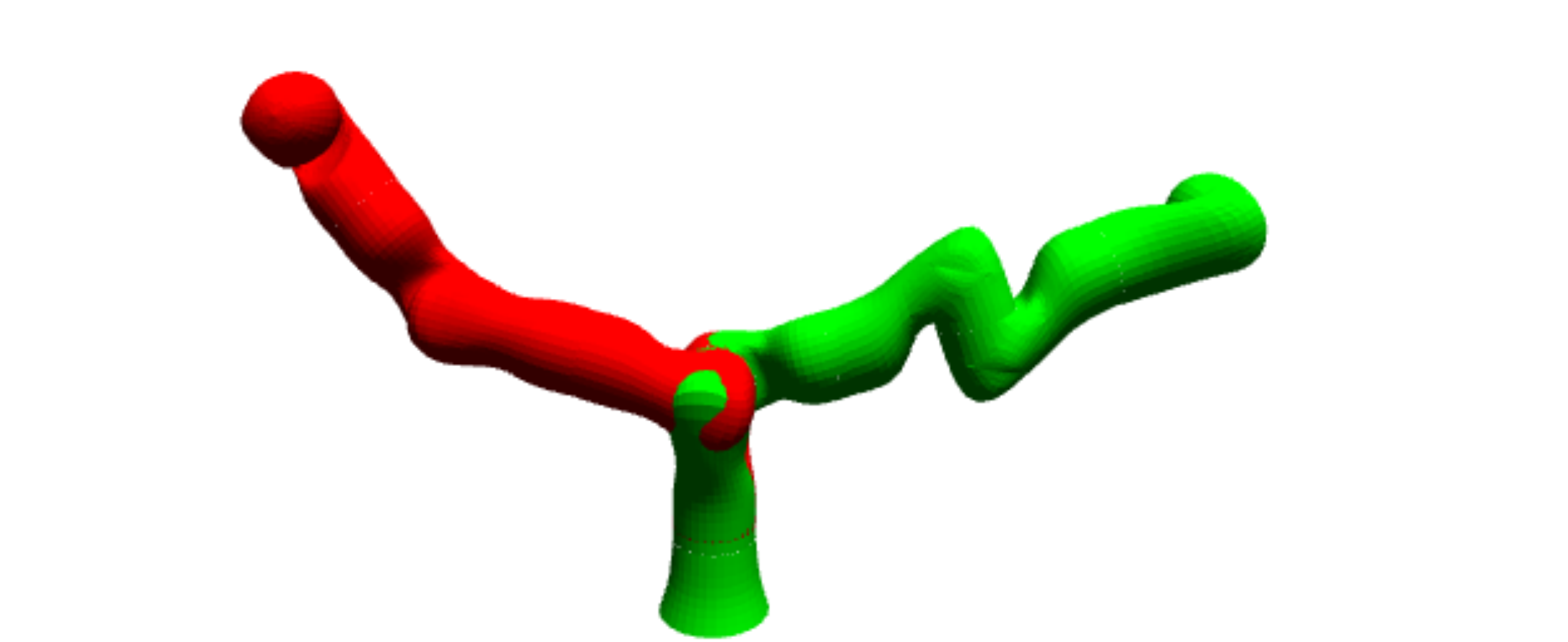}{$\R^5$}
    \subfigE{images/kuka_2dof.pdf}{$\R^6$}
    \subfigE{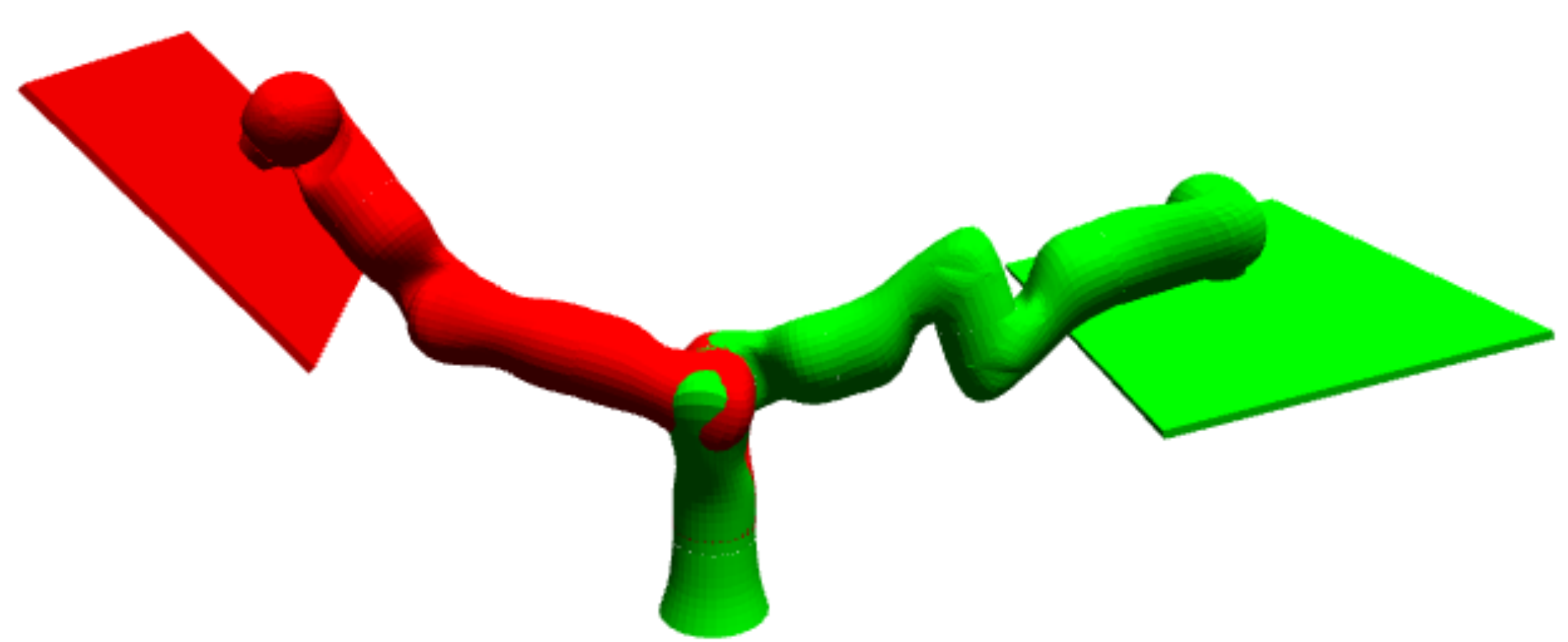}{$\R^7$}
    \subfigE{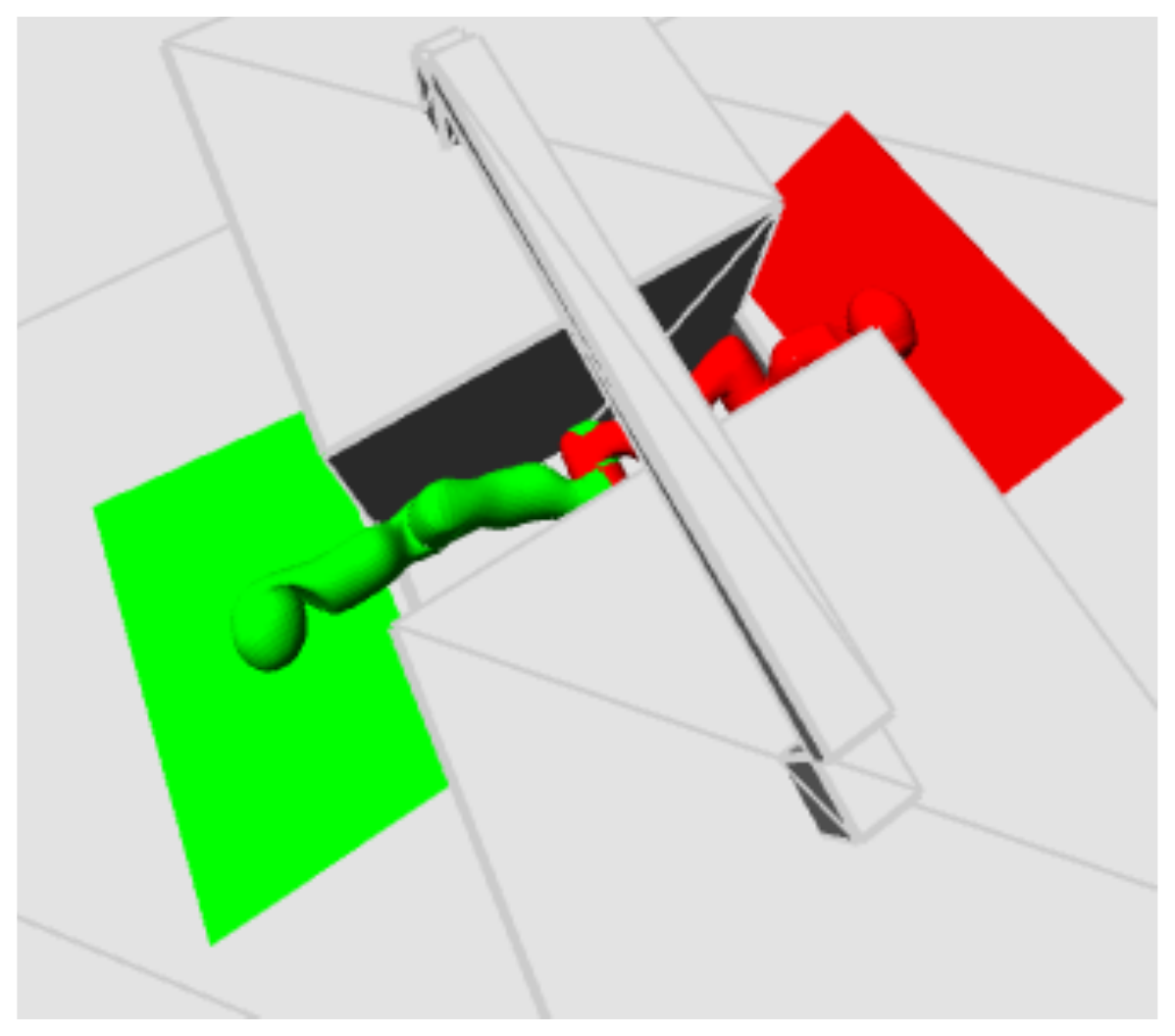}{Environment}    
    \caption{The start (green) and goal (red) configuration of the 7-dof KUKA manipulator arm projected onto five quotient-spaces $\R^{\{1,\cdots,5\}}$. The quotient-space $\R^6$ has been ignored, because the volume of the robot is equivalent to the robot on the quotient-space $\R^5$.}
    \label{fig:results_kuka_subspaces}
\end{figure}
\begin{figure}[!ht]
    \centering
    \includegraphics[width=0.48\linewidth]{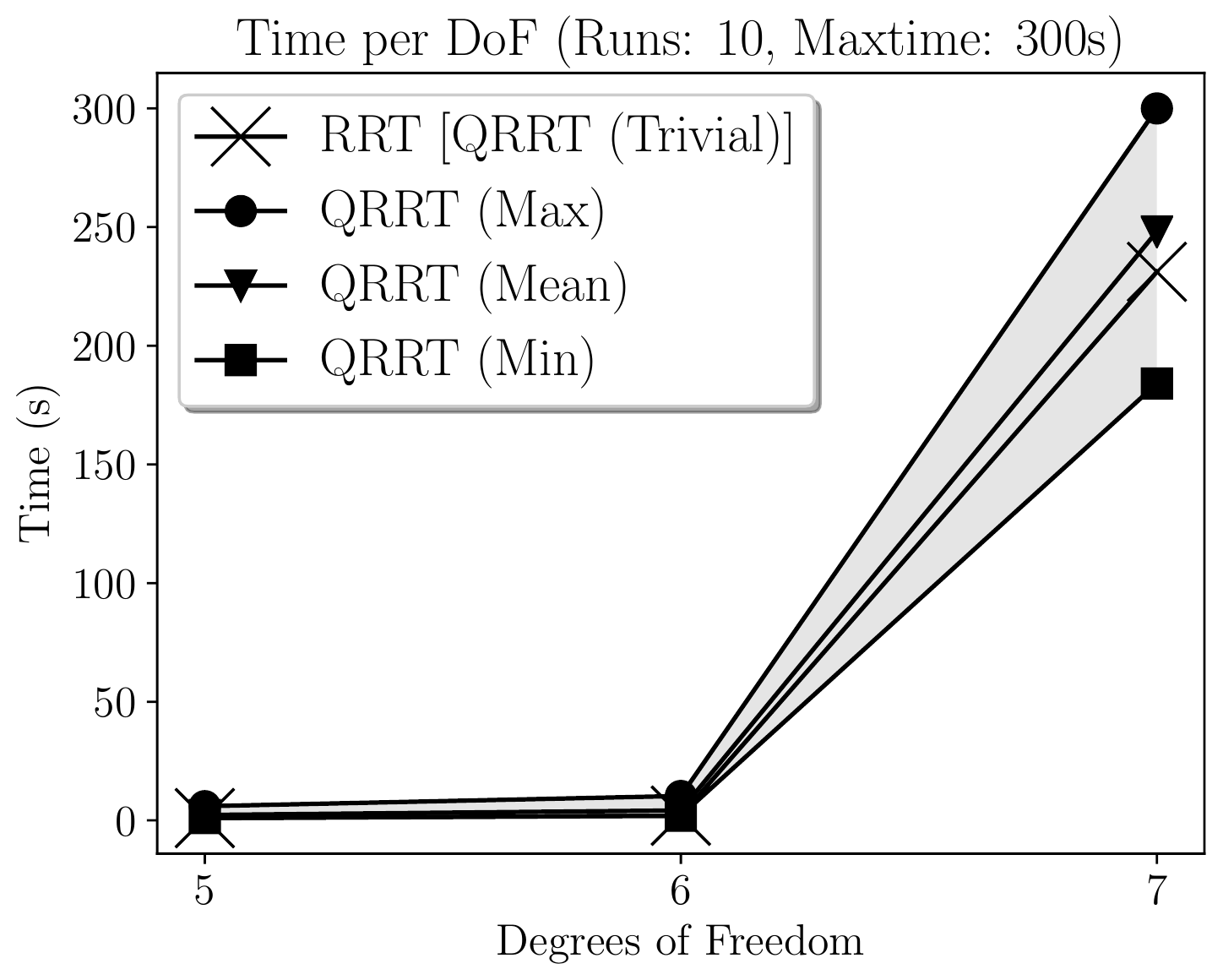}  
    \includegraphics[width=0.48\linewidth]{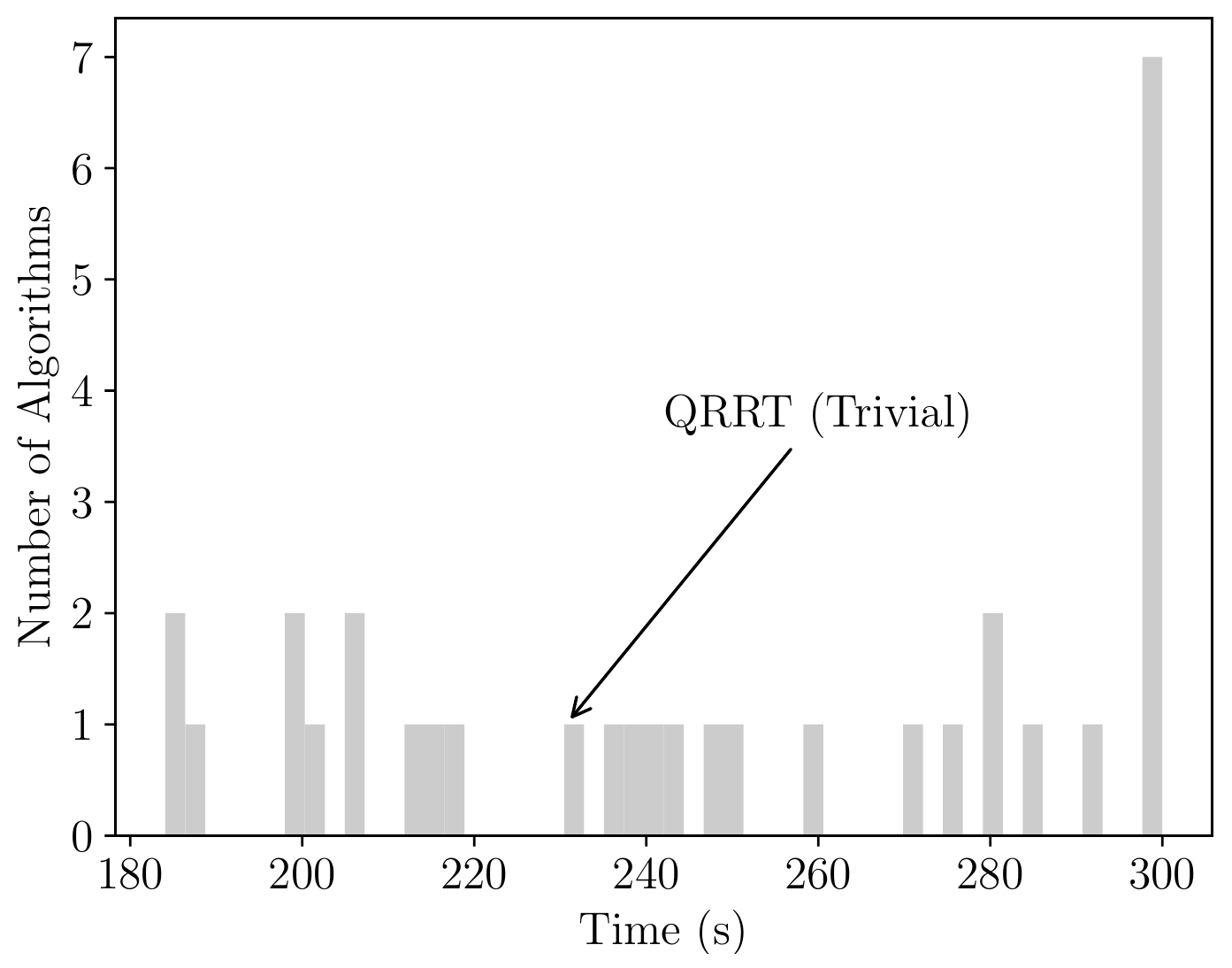}
    \caption{Runtime of QRRT using different sequential simplifications for the fixed-base spatial robot.}
    \label{fig:results_kuka}
\end{figure}

\subsubsection{Free-Floating Spatial}

Our last experiment is a $10$-dof spatial free-floating robot with configuration space $SE(3) \times \R^4$, which has to move through a twisted pipe. We define three efficient projections, visualized in Fig. \ref{fig:results_spatial_snake_subspaces}. The results in Fig. \ref{fig:results_spatial_snake} (left) show that the trivial simplification performs worst, with $300$s for $8$-dof (timelimit), while the best sequential simplification has runtime of $17$s, a reduction of at least one order of magnitude.

\renewcommand\subfigE[2]{
\begin{subfigure}[b]{0.23\linewidth}
   \includegraphics[width=\linewidth]{#1}
   \caption{#2}
\end{subfigure}
}
\begin{figure}[!ht]
    \centering
    \subfigE{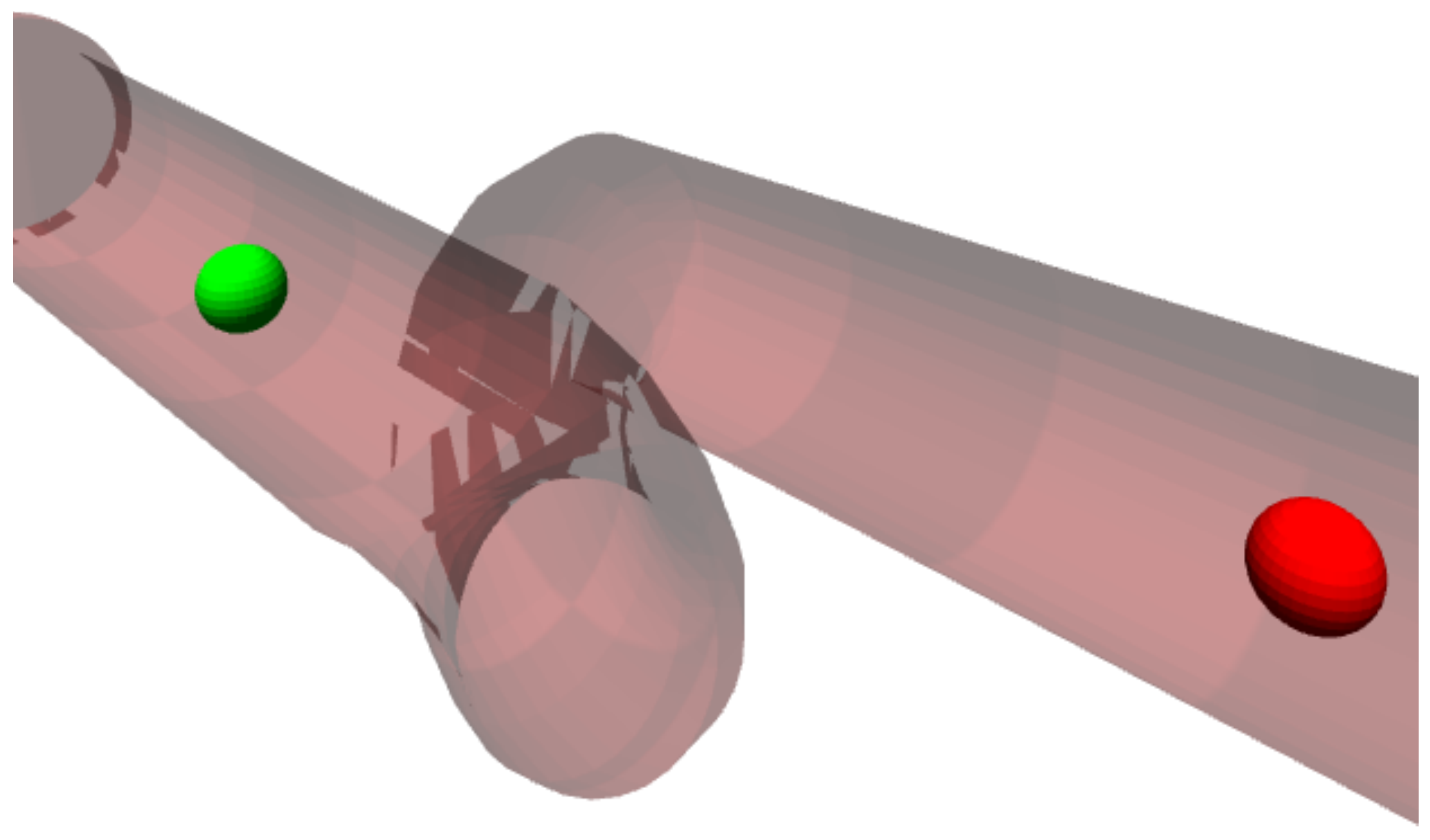}{$\R^3$}
    \subfigE{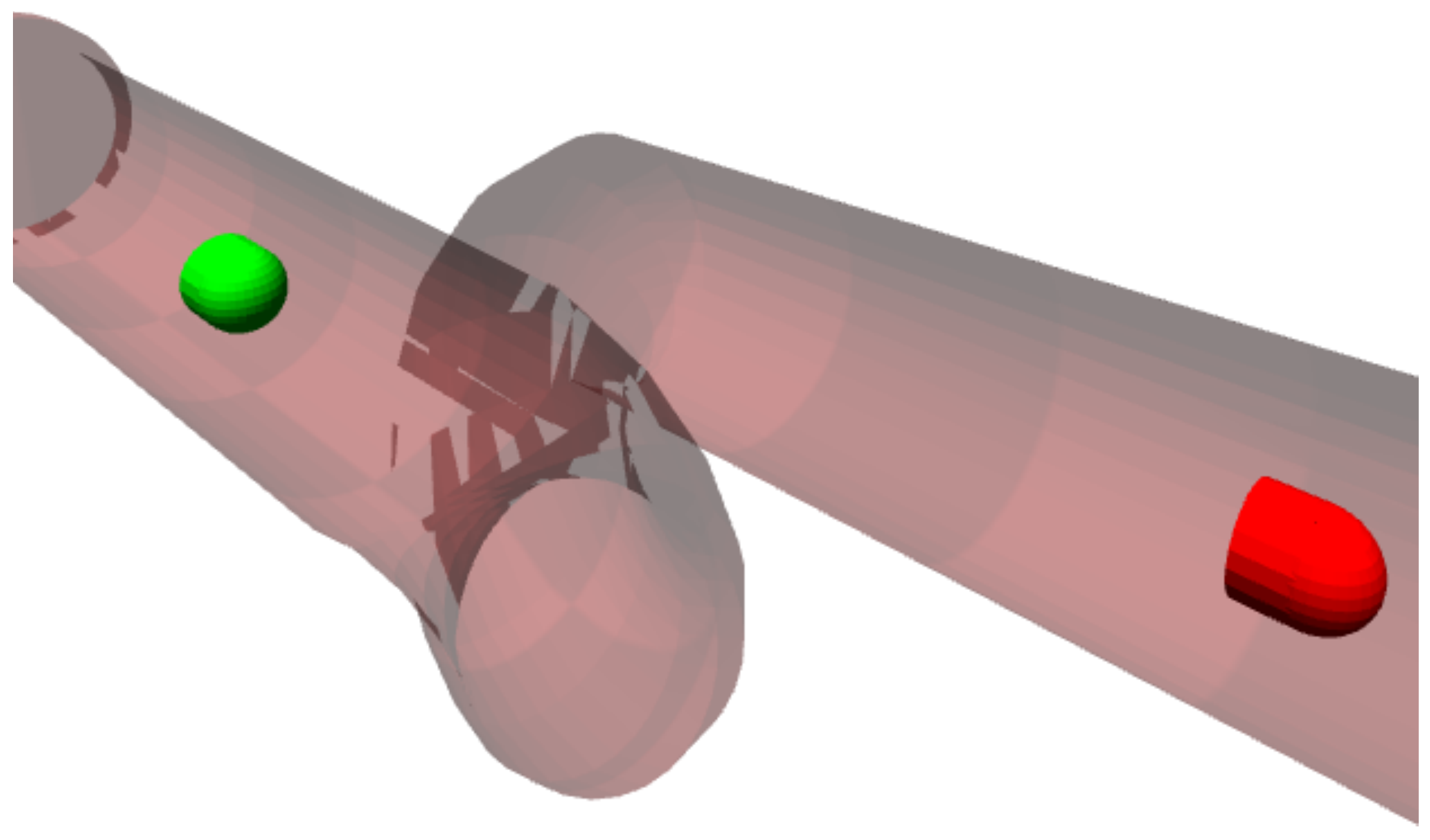}{$SE(3)$}
    \subfigE{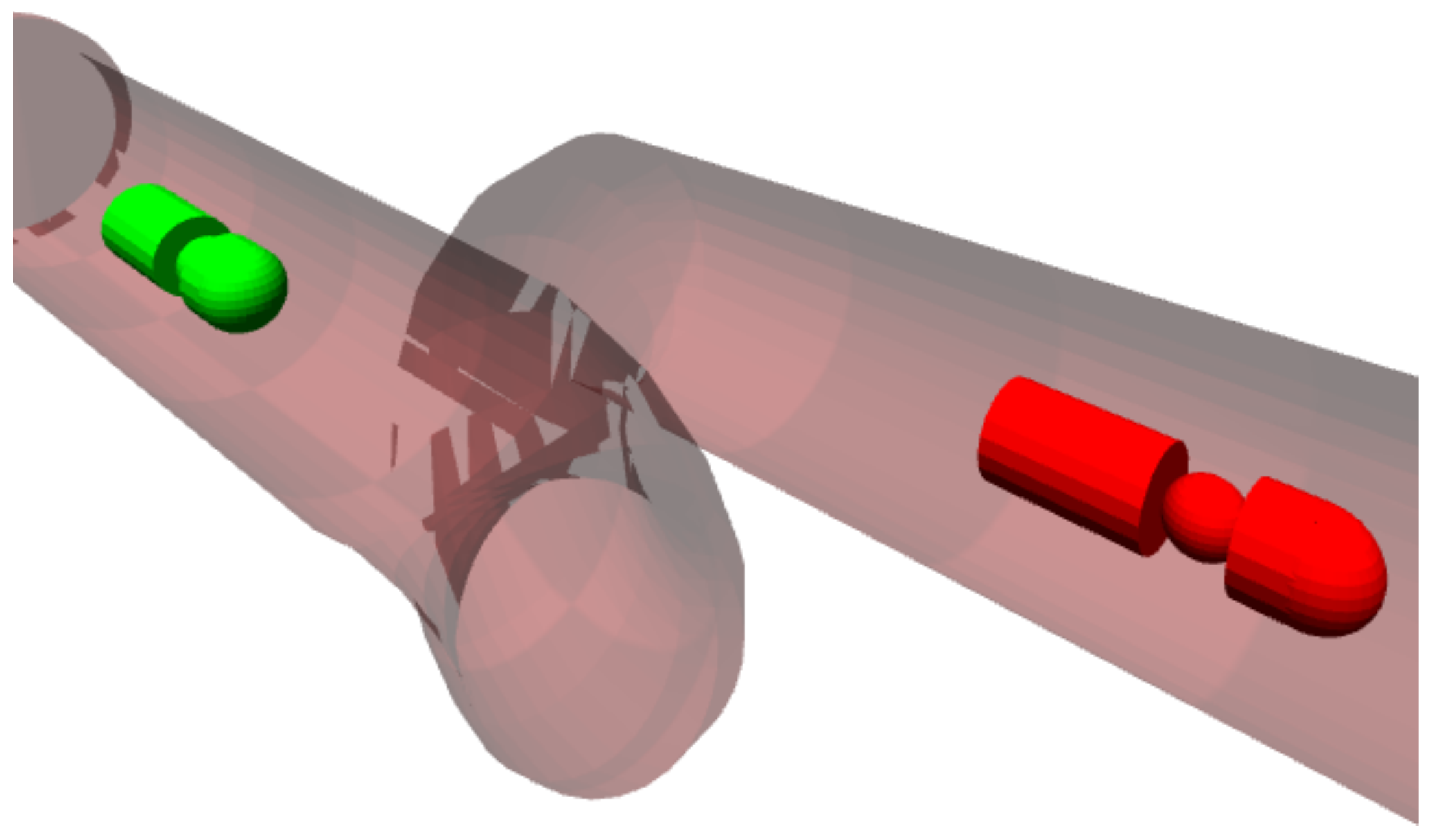}{$SE(3)\times \R^2$}
    \subfigE{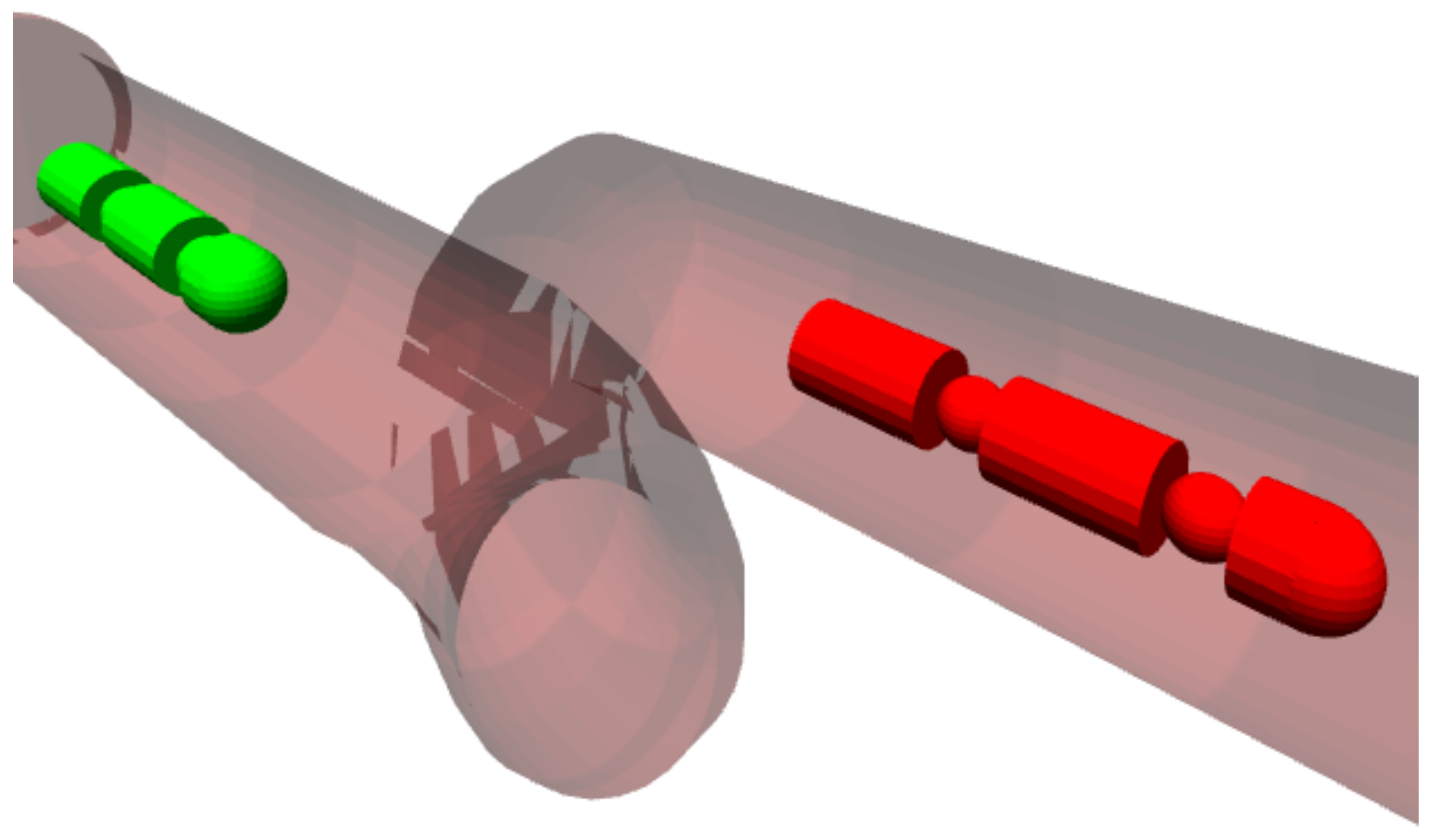}{$SE(3)\times \R^4$}
    \caption{The start (green) and goal (red) configuration of the 10-dof spatial articulated body projected onto four quotient-spaces $\R^3$,$SE(3)$, and $SE(3)\times \R^{\{2,4\}}$.}
    \label{fig:results_spatial_snake_subspaces}
\end{figure}
\begin{figure}[!ht]
    \centering
    \includegraphics[width=0.48\linewidth]{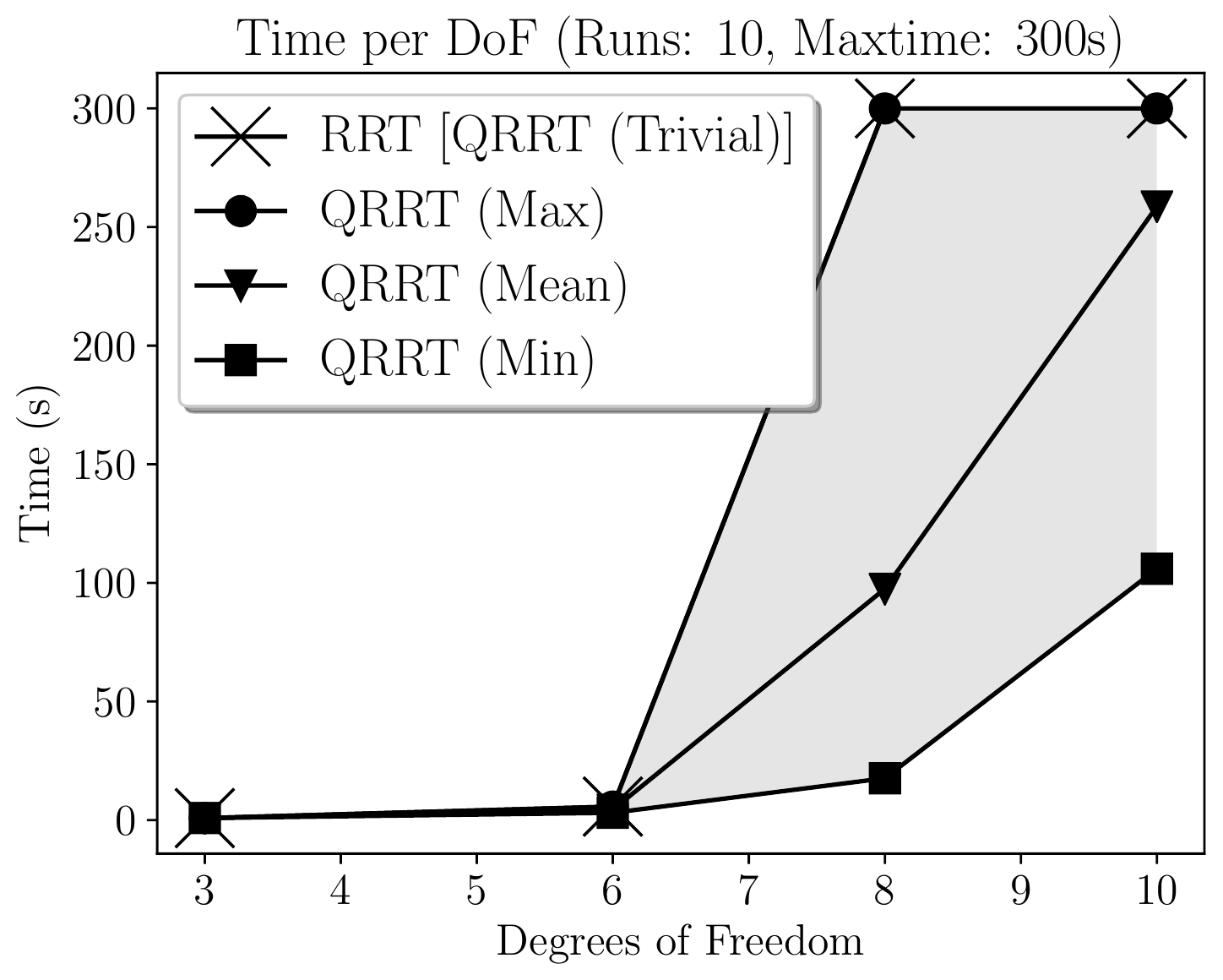}
    \includegraphics[width=0.48\linewidth]{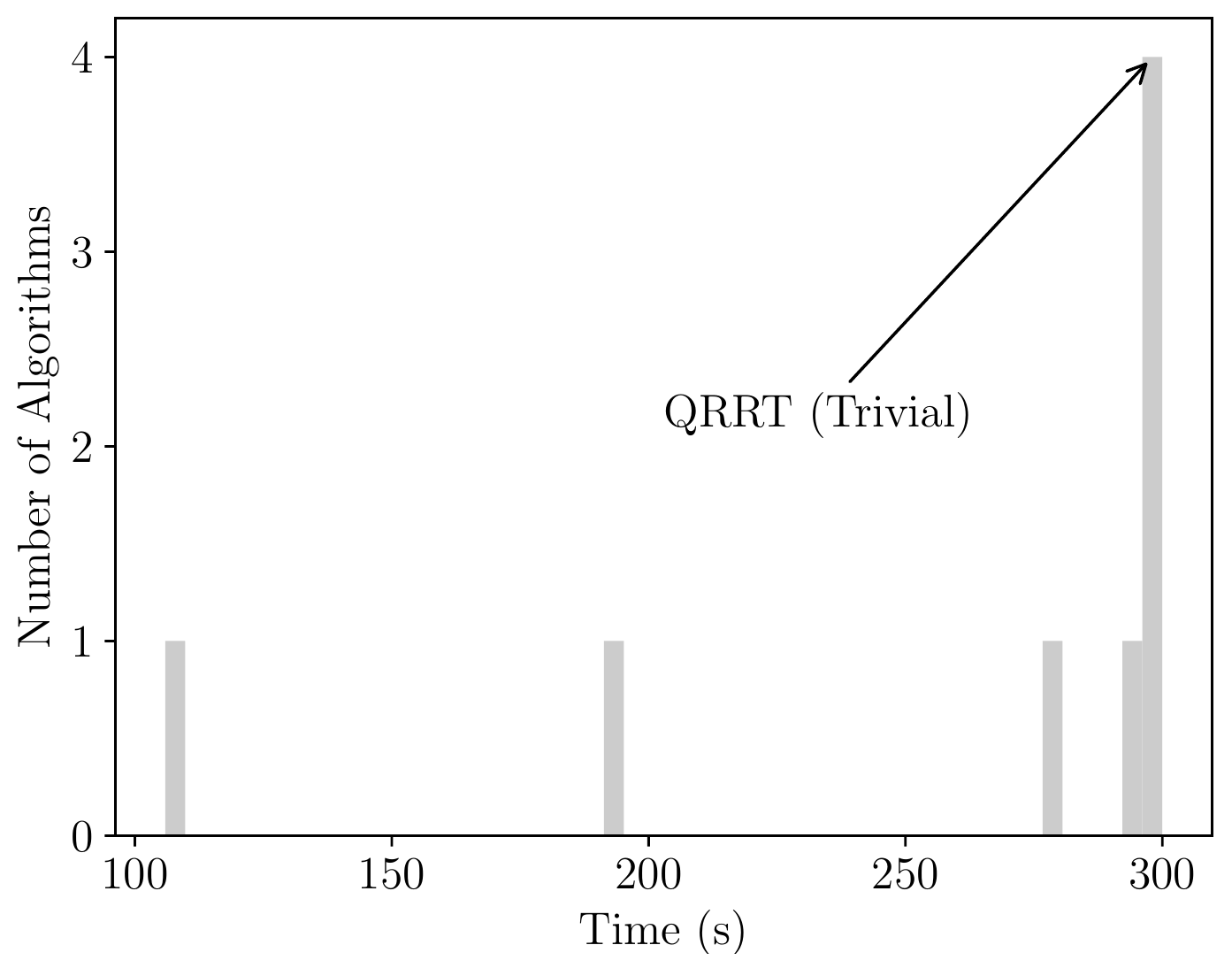}

    \caption{Runtime of QRRT using different sequential simplifications for the free-floating spatial robot.}
    \label{fig:results_spatial_snake}
\end{figure}

\subsection{QRRT in Narrow Passage}

The experiments show that the trivial simplification performed worst for the $8$-dof planar manipulator and the $10$-dof spatial free-floating robot, but performed best for the $7$-dof planar articulated body. We would like to understand why that is the case, and when we should prefer the trivial simplification over a non-trivial one.

We observe that any additional projection incurs an additional cost. On the one hand, if an environment contains no narrow passages, it can usually be solved quickly, while building simplifications incurs overhead costs. On the other hand, if there are narrow passages in the environment, the cost of constructing simplifications is negligible, and runtime can be reduced.

To test this idea, we have build a simple environment with a narrow passage of width $2\alpha > 0$, as shown in Fig. \ref{fig:narrow_passage}. Prior work by \cite{saha_2005} showed that the runtime of a trivial simplification (i.e. RRT) increases exponentially with decreasing $\alpha$. 

We have used a 4-dof robot in the plane, with configuration space $X = SE(2) \times \R^2$, where we define four quotient-space sequences: the trivial one $\{X\}$, $\{\R^2,X\}$, $\{SE(2),X\}$, and $\{\R^2,SE(2),X\}$. The corresponding algorithm for each simplification will be named $\QRRT{4}$,$\QRRT{24}$,$\QRRT{34}$, and $\QRRT{234}$, respectively, where the number indicates the dimensionality of each space.
The radius of the robot is $0.1$m, such that the passage could be traversed whenever $\alpha \geq 0.1$m. The results for different values of $\alpha$ are shown in Fig. \ref{fig:benchmark_narrow_passage}. On the left, values of $\alpha$ in $[0.34,0.25]$ show the trivial simplification (bold black curve) performs best (lower runtime is better). However, for smaller values of $\alpha$ in $[0.125,0.115]$, we observe (on the right) how the trivial simplification performs worst, while the cost of building quotient-spaces pays off: The smaller the narrow passage, the better the non-trivial simplifications perform. For $\alpha=0.115$, the runtime reduction from trivial ($\QRRT{4}$) to best simplification ($\QRRT{24}$) is $120$s to $30$s, or around one order of magnitude.

\renewcommand\subfigE[2]{
\begin{subfigure}[b]{0.3\linewidth}
   \includegraphics[width=\linewidth]{#1}
   \caption{#2}
\end{subfigure}
}
\begin{figure}[!ht]
    \centering
    \subfigE{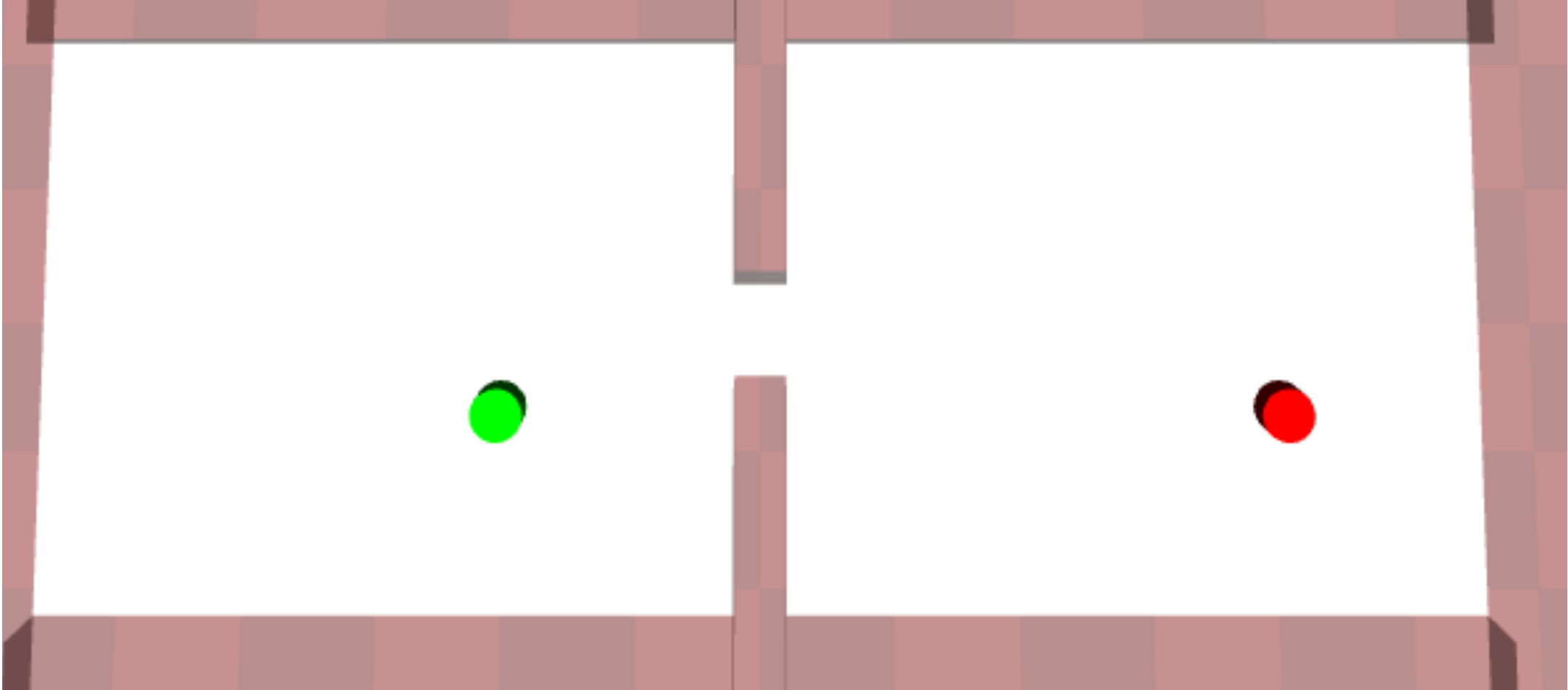}{}
    \subfigE{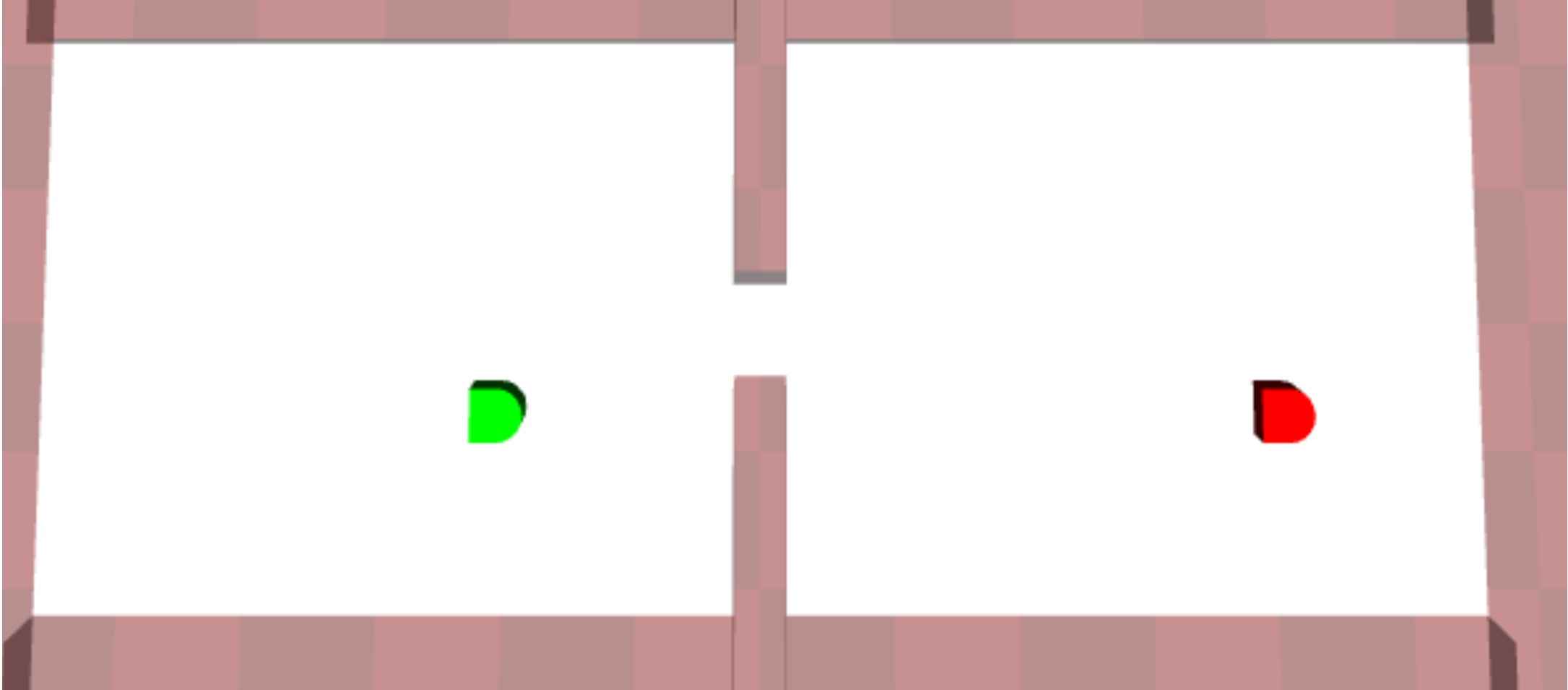}{}
    \subfigE{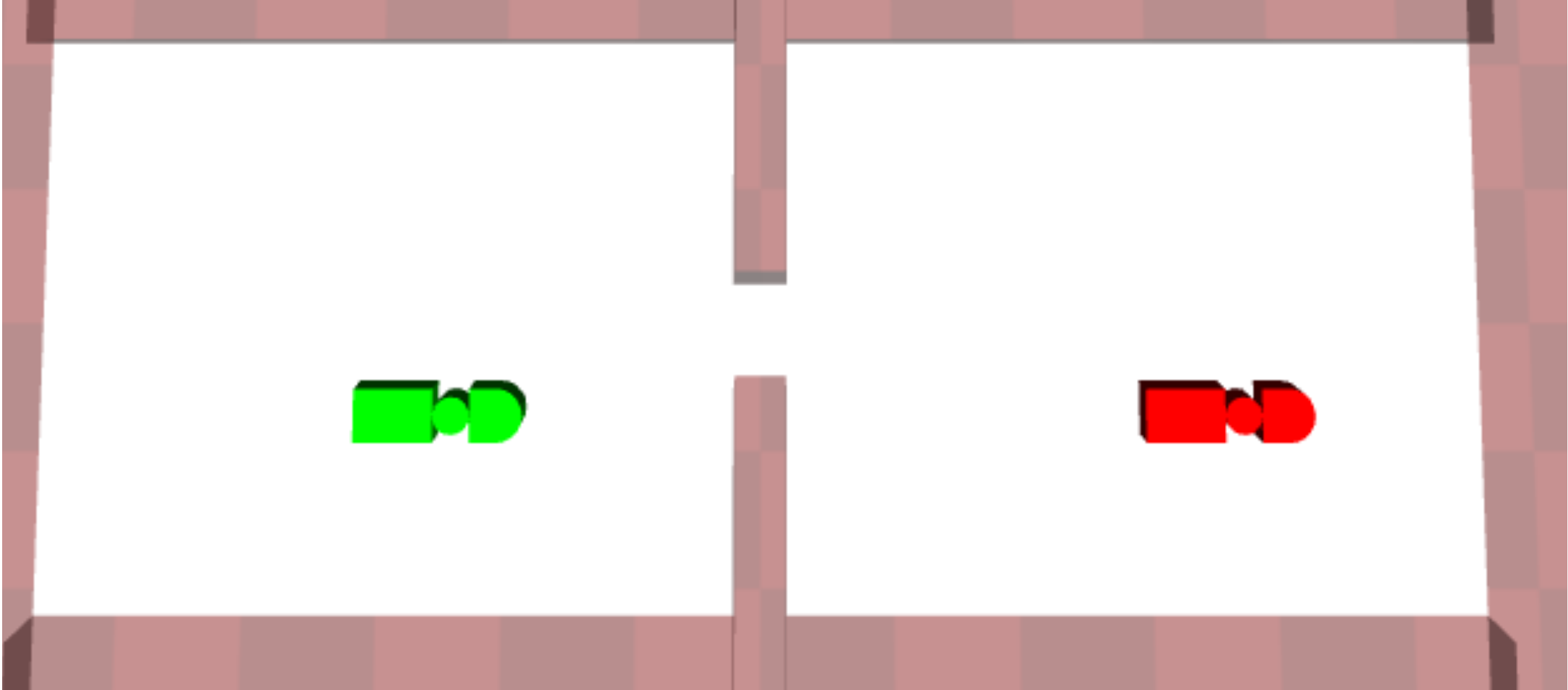}{}
    \caption{4-dof planar robot in narrow passage. The configuration space of the robot has three quotient-spaces, $\R^2$ (left), $SE(2)$ (middle) and $SE(2)\times \R^1$ (right). The size of the opening is $2\alpha$, and the radius of the disk is $r=0.1$. \label{fig:narrow_passage}}
    
\end{figure}
\renewcommand\subfigE[2]{
\begin{subfigure}[b]{0.47\linewidth}
   \includegraphics[width=\linewidth]{#1}
   \caption{#2}
\end{subfigure}
}
\begin{figure}[!ht]
    \centering
    \subfigE{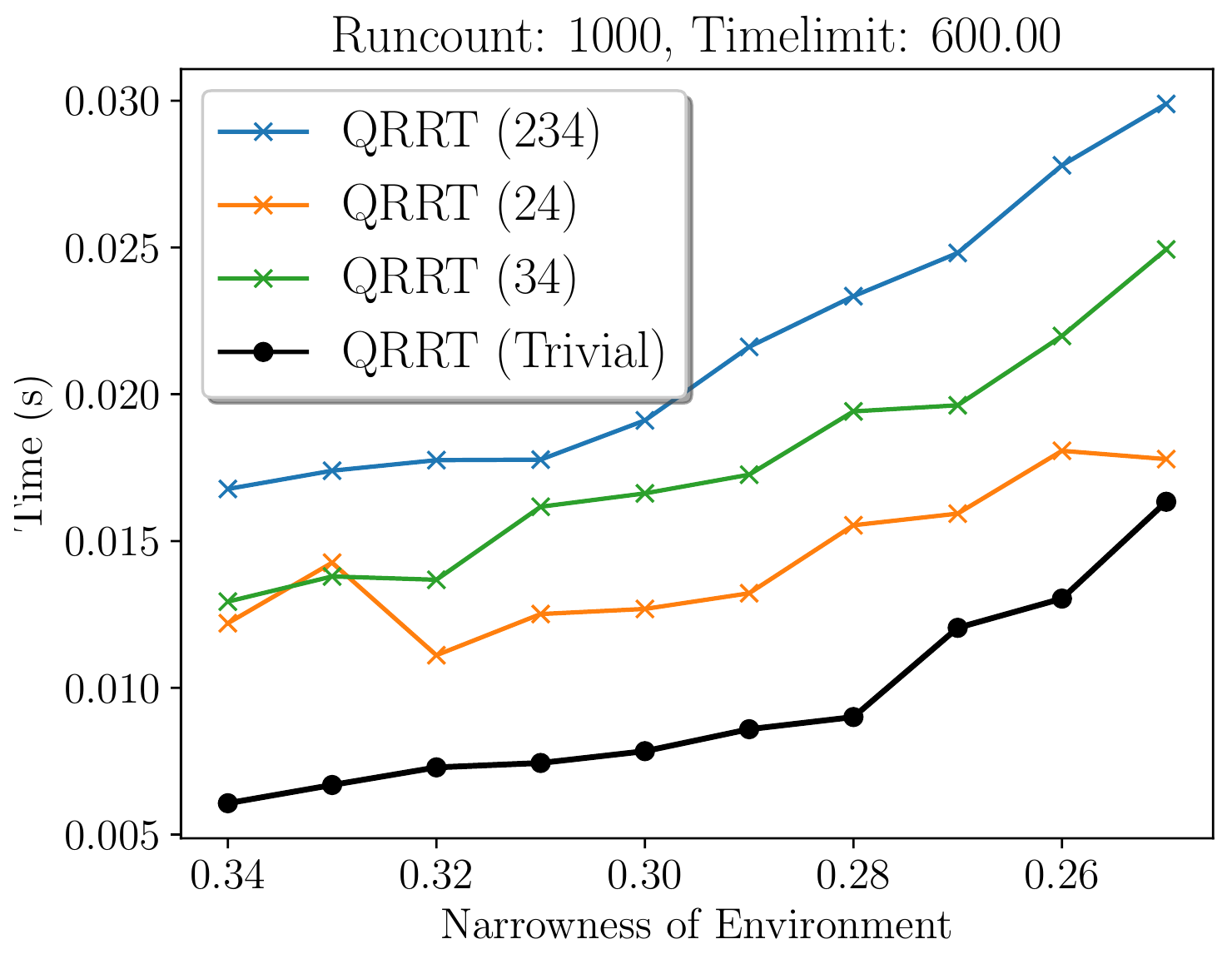}{}
    \subfigE{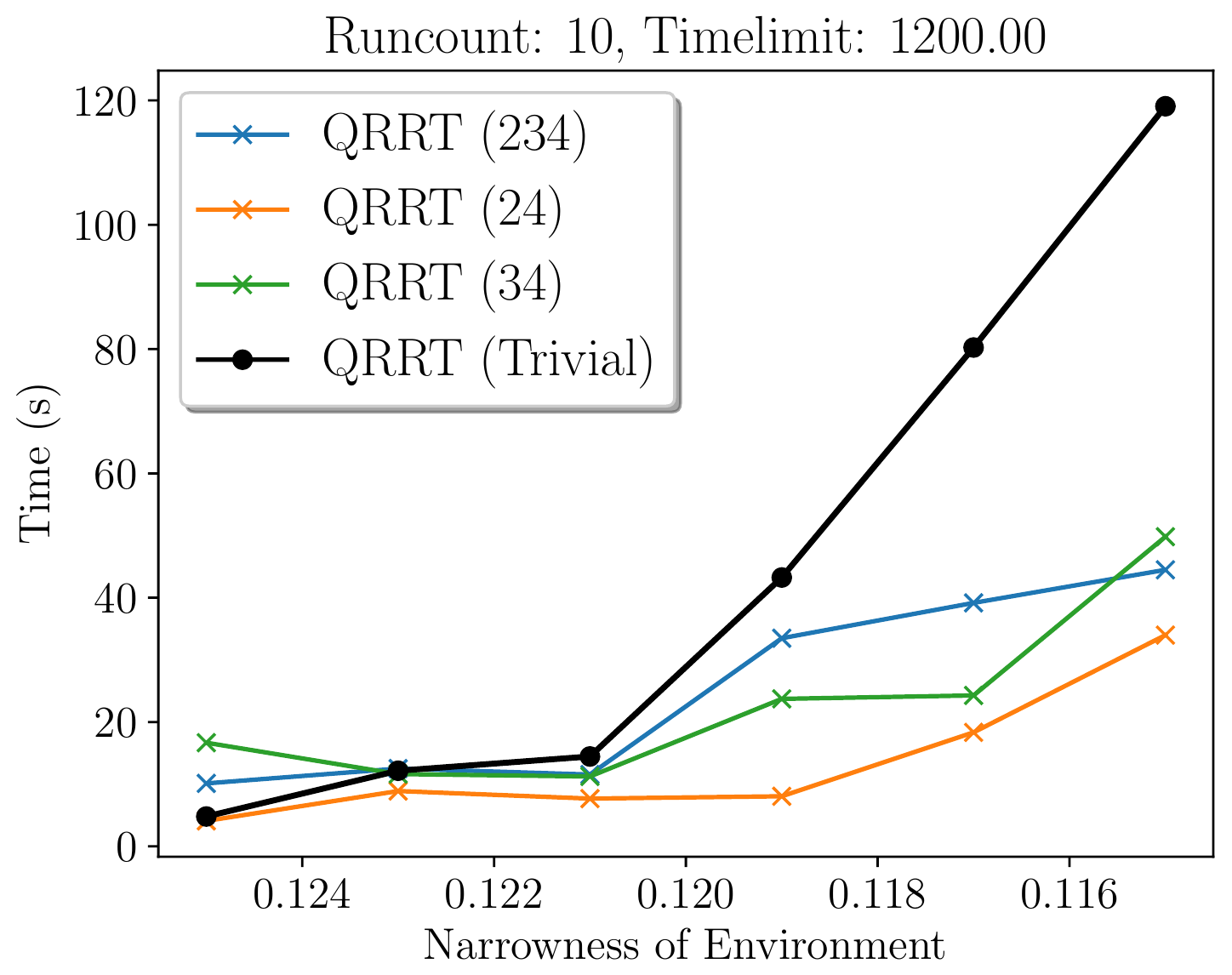}{}
    
    \caption{Benchmark of QRRT on a narrow passage environment. Left: QRRT using the trivial simplification has lowest runtime in a non-narrow passage ($\alpha \in [0.34,0.25]$). Right: QRRT using non-trivial simplifications have lowest runtime in narrow passages ($\alpha \in [0.125,0.115]$).
    \label{fig:benchmark_narrow_passage}
    }
    
\end{figure}

\section{Conclusion}

We have developed the Quotient-space Rapidly-exploring Random Trees (QRRT) algorithm. QRRT generalizes the RRT algorithm \cite{lavalle_1998} to sequential simplifications. 

QRRT with different sequential simplifications will yield different runtimes. While we showed that the runtime depends on narrow passages in the environment, there might be other factors influencing it like the robots geometry. Therefore, we cannot say which simplification minimizes the runtime for QRRT. To find such a minimal runtime simplification, we might need to search through sequential simplifications automatically or even investigate non-admissible simplifications. 

While choosing a good simplification is crucial, we were able to show that some simplifications can reduce the runtime of QRRT by at least one order of magnitude. This shows that finding good simplifications is an important factor in building fast planning algorithms.

\bibliographystyle{IEEEtranS}
{\small
\bibliography{IEEEabrv,bib/planning,bib/quotientspace}
}
\end{document}